\newtheorem{definition}{Definition}
\newtheorem{theorem}{Theorem}
\newtheorem{lemma}{Lemma}
\title{Stable Memory Allocation in the Hippocampus:\\
	Fundamental Limits and Neural Realization}
\author{Wenlong Mou\thanks{Key Laboratory of Machine Perception, School of EECS, Peking University. Email: \texttt{mouwenlong@pku.edu.cn}} \\
	\and
	Zhi Wang\thanks{Key Laboratory of Machine Perception, School of Mathematics, Peking University. Email: \texttt{zhiwpku@gmail.com}}\\
	\and
	Liwei Wang\thanks{Key Laboratory of Machine Perception, School of EECS, Peking University. Email: \texttt{wanglw@cis.pku.edu.cn}}}
\begin{document}

\maketitle
\begin{abstract}
	
	It is believed that hippocampus functions as a memory allocator in brain, the mechanism of which remains unrevealed. In Valiant's neuroidal model \cite{hippo}, the hippocampus was described as a randomly connected graph, the computation on which maps input to a set of activated neuroids with stable size. Valiant proposed three requirements for the hippocampal circuit to become a stable memory allocator (SMA): stability, continuity and orthogonality. The functionality of SMA in hippocampus is essential in further computation within cortex, according to Valiant's model.
	
	In this paper, we put these requirements for memorization functions into rigorous mathematical formulation and introduce the concept of capacity, based on the probability of erroneous allocation. We prove fundamental limits for the capacity and error probability of SMA, in both data-independent and data-dependent settings. We also establish an example of stable memory allocator that can be implemented via neuroidal circuits. Both theoretical bounds and simulation results show that the neural SMA functions well.
\end{abstract}

	\section{Introduction}
	It is well known that the computational speed of human brain is several orders of magnitude slower than that of current computer, while almost everything human can do easily are still beyond the capability of the most powerful computers. Thus, the investigation into computational mechanisms of human brains leads a promising direction for theoretical aspect of computation, machine learning and neuroscience. In particular, the formation of memory, during which the neural circuit creates a representation for an object, was not fully understood, though it has been observed that the hippocampus plays a significant role in this process.\par
	A research line developed by Valiant \cite{neurobook,valiant2000neuroidal,valiant2006quantitative,valiant2014must} focused on the connectivity structure of neuroids, i.e., the computational abstraction of neurons, on which local computation can be performed. This formal and conservative model assumes the brain to be a randomly connected graph of neuroids. Each of them can be either active or inactive, depending on the states of neighbors that have a directed edge to it. Biological constraints are also taken into consideration, such as weak synaptic connections and sparsity of graphs. With biological evidences, this model captures the capability of realistic neurons. Specific supervised and unsupervised learning tasks such as inductive learning, rational expressions, and reasoning are viable in this model, using the vicinal algorithms \cite{memo,valiant2006quantitative,DBLP:journals/neco/FeldmanV09}. In Valiant's model, those complicated processes are based on basic operations of $JOIN$ and $LINK$. Recently, a new operation called $PJOIN$, namely, predictive $JOIN$, was proposed for unsupervised learning of patterns \cite{papadimitriou2015cortical,DBLP:conf/podc/PapadimitriouV15}. All those operations, as well as experimental evidences from neuroscience, requires the representation for an item to be allocated with a stable amount of neurons, which is done by hippocampus, before the computation in cortex.\par
	Valiant \cite{hippo} first discussed the role and computational mechanisms of hippocampus in this model. The paper stated that the function of hippocampus is to "identify the set of neurons in cortex at which a new compound concept or chunk will be represented, and to enable that set of neurons to take on that role". The stability of the amount of neurons allocated plays a key role in Valiant's neuroidal models in order to control each new chunk within a limited range so as to avoid the overall system becoming unstable. To guarantee this, three requirements for stable memory allocators (SMA) was proposed in their paper, namely, stability, continuity and orthogonality. Valiant pointed out the direction by discussing biological constraints and proposing an example in his paper, yet rigorous mathematical treatment is still in need. Our systematic analysis of stable memory allocators is therefore important in the neuroidal model.\par
	Memorization process creates neural representation for arbitrary set of input item, either randomly or through learning from samples. Performance and characteristics of such mapping functions have been vastly studied in machine learning, theoretical computer science and computational neuroscience. For example, in \cite{signcons}, a sign-consistent sparse JL transform is constructed to explain the formation of memory, under a different model of neurons. \cite{DBLP:conf/focs/ArriagaV99} proposed a neuron-friendly random projection for robust concept learning via dimension reduction. Hopfield networks \cite{hopfield} dynamically adjust network weights to create memory for new items. Locality sensitive hashing \cite{lsh} and binary embeddings \cite{DBLP:conf/icml/YiCP15} randomly maps input vector into a discrete space while preserving locality. Our work relates to them in terms of compression of information and preservation of locality.\par
	In this paper, we first manage to put Valiant's idea into a rigorous mathematical formulation. We give basic theoretical results and a simple construction in Section 2. By introducing concepts of error probability and capacity into our formulation, we are able to better characterize the fundamental possibilities and impossibilities of memory allocation. Their theoretical bounds are discussed under both data-dependent and data-independent settings; see Section 3.  While the network designed by Valiant works in special cases \cite{hippo}, it could not be generalized to the scale of real hippocampus. That's why we propose, in Section 4, a feed-forward neural network model to realize stable memory allocation in hippocampus, and at the same time consistent with the biological  constraints. In contrast to subtractive models \cite{hippo}, the inhibition patterns of neurons are divisive in our design. We show that this stable memory allocator can better achieves the requirements with reasonable capacity. 
	
	\section{Formulation of SMA}
	In this section, we will first refine the requirements for the stable memory allocation task \cite{hippo}, and put it into mathematical formulations.
	
	\subsection{Basic requirements and motivation}
	As proposed in \cite{hippo}, a valid stable memory allocator (SMA) for the cortex should subject to three basic requirements: stability, continuity and orthogonality. In the following we present high-level ideas in the formulation of SMA, and discuss basic requirements for a stable memory allocator to function well in Valiant's model.
	\begin{itemize}
		\item Stability: The mapping is stable in that for inputs within a wide range of activity levels, the output will have activity level differing little. This condition is fundamental in vicinal algorithms in the cortex, and also reasonable in neural representation of abstract concepts.
		\item Continuity: A noise in the input pattern should not cause huge variations in the output. Continuity can provide neural systems with robustness (error-correcting) and help to realize pattern recognition functions (nearest-neighbor search). 
		\item Orthogonality: With this property, distinct items will not be confused. Pairs of inputs that differ by much will be mapped to outputs that also differ sufficiently, so that they can be treated by cortex as distinct. We weaken the orthogonality requirement in \cite{hippo} and broaden the class of stable memory allocators, yet the biological functionality can still be guaranteed.
	\end{itemize}
	The randomness of mapping function plays an essential role in a stable memory allocator, since any data-independent deterministic allocation function will fail upon adversarial inputs: Suppose we have a fixed SMA which maps vectors in $\{0,1\}^n$ to a stable number of activated neurons. By pigeon-hole principle there must be $2^{\Omega(n)}$ vectors mapped to the same set of neurons. Among those colliding inputs, there must be a pair of vectors with Hamming distance $\Omega(n)$, which leads to the failure of orthogonality. On the contrary, a randomized SMA, like random projections, can fulfill these requirements for arbitrary input set of bounded size, with high probability. This observation is important and triggers the concept of error probability in our framework. We can thereby define the concept of capacity as the maximal number of objects that a stable memory allocator can handle with low error probability. In other words, the capacity of stable memory allocator stands for the number of items one can memorize without being confused.
	\subsection{Mathematical formulation}
	In our framework, SMA is formulated as a series of distributions with respect to size $n$ over function space. Although the growth of synapses is quite random, there are some underlying rules that govern the formation of random graph so that almost everyone can allocate memory for new incoming items without any difficulty, no matter the difference of inner connections for each individual. We therefore demand the three properties here to hold uniformly with high probability, which guarantees more robustness than the model based on expected value \cite{hippo}. It is also worth noticing that we do not make any assumptions on the structure or distribution of input vectors.
	\begin{definition}
		A series of distributions $\{\mathcal{H}_n\}$ over function space $\left\{0,1\right\}^n\rightarrow\left\{0,1\right\}^n$ is called a stable memory allocator (SMA) with parameter $\langle \delta,\mu,\kappa,A_n,B_n,r_n\rangle$ and capacity $K_n$, if $\forall \varepsilon>0$, $\exists N\in \mathbb{N}$ s.t. $\forall n>N$ $\forall  S_n\subset\left\{0,1\right\}^n \setminus \left\{\bm{x} | \vert \bm{x} \vert \leq \kappa n \right\},  \vert S_n\vert \leq K_n$, the following three properties hold
		\begin{equation}
		\begin{split}
		&	\mathop{Pr}_{h\sim\mathcal{H}_n}{\left\{\forall \bm{x}\in S_n, \vert h(\bm{x})\vert\ \in \left((1-\delta)r_n,(1+\delta)r_n\right)\right\}}>1-\varepsilon\\
		&\mathop{Pr}_{h\sim\mathcal{H}_n} \left\{\forall \bm{x}\in S_n, \bm{y} \in \{0,1\}^n, d_H(h(\bm{x}),h(\bm{y}))\leq \mu d_H(\bm{\bm{x},\bm{y}})\right\}>1-\varepsilon\\
		&\mathop{Pr}_{h\sim\mathcal{H}_n}{\left\{\forall \bm{x},\bm{y}\in S_n,d_H(\bm{x},\bm{y})>A_n,	 d_H(h(\bm{x}),h(\bm{y}))>B_n\right\}}>1-\varepsilon
		\end{split}
		\end{equation}
	\end{definition}
	It is natural to assume $\forall \bm{x},\bm{y}\in S_n,d_H(\bm{x},\bm{y})>A_n$ according to our interpretation of orthogonality. There are some requirements for parameters of SMA: as the output vector is stably sparse, the number of activated neurons in the output $r_n$ should satisfy $r_n/n \ll 1$. Also, $A_n$ and $B_n$ should increase with respect to size $n$ and one typical assumption is that they grow linearly: $A_n= an, B_n= br_n$.  Notice that $a$ is also a small number compared to 1, otherwise only few items are considered "truly different" and orthogonality would fail.\par
	Not all SMAs have good performance. When judging a SMA, we should first consider those parameters that define it: smaller Lipschitz factor $\mu$ in the continuity requirement means better preservation of input information and stronger error-correcting functions. SMA with larger $B_n$ and smaller $A_n$ can distinguish items better. The parameter $\kappa$ represents the lowest input density required. The following discussion, without pointing out explicitly, regards elements in $S_n$ as not too sparse (with density larger than $\kappa$) for all the $S_n$ mentioned. A small $\delta$ stands for good performance in stability. The extreme case is that $\delta=0$, which is  a strong version that realize stability precisely:
	\begin{description}
		\item[Strong stability]
		\begin{equation}\mathop{Pr}_{h\sim\mathcal{H}_n} \left\{\forall \bm{x}\in S_n, h(\bm{x})\in T_n^{(r_n)} \right\} >1-\varepsilon
		\end{equation}
		where $T_n^{(r_n)} \triangleq \left\{\bm{x}\in\left\{0,1\right\}^n\big| \vert\bm{x} \vert=r_n \right\}$ and we call a specific allocator function $h(\bm{x})$ strong stable with respect to input set $S_n$ if  $ \forall \bm{x} \in S_n, h(\bm{x})\in T_n^{(r_n)}$.
	\end{description}
	An SMA that satisfies the strong stability defined above is called a strong SMA.
	\subsection{Relating error probability to capacity}
	The key issue involved within capacity is error probability. The more items are there to be allocated, the more conflicts are likely to be happen in the outputs. Thus, one target of designing SMA is to maximize its capacity, i.e. to avoid the possible conflicts.\par
	Theorem~\ref{lower bound} reveals the relationship between pairwise error probability and capacity, in order to fulfill the requirements uniformly with high probability. In the design and analysis of SMA, this theorem serves as a useful tool to guarantee capacity.
	\begin{theorem}\label{lower bound}
		If a series of distributions $\{\mathcal{H}_n\}$ over function space $\left\{0,1\right\}^n\rightarrow \left\{0,1\right\}^n$ satisfies the following properties:
		\begin{itemize}
			\item For $\forall \bm{x}\in \left\{0,1\right\}^n$ fixed ,with probability at least $1-\varepsilon_n$:
			\begin{equation}\vert h(\bm{x})\vert\ \in \left((1-\delta)r_n,(1+\delta)r_n\right)
			\end{equation}
			\item For $\forall \bm{x}\in\left\{0,1\right\}^n$ fixed, with probability at least $1-\varepsilon_n$:
			\begin{equation}\forall \bm{y} \in \left\{0,1\right\}^n, d_H(h(\bm{x}),h(\bm{y}))\leq \mu d_H(\mathbf{\bm{x},\bm{y}})\end{equation}
			\item For $\forall \bm{x},\bm{y}\in \left\{0,1\right\}^n$ fixed, $d_H(\bm{x},\bm{y})>A_n$, we have: \begin{equation}
			\mathop{Pr}_{h\sim\mathcal{H}_n}{\left\{ d_H\left(h(\bm{x}),h(\bm{y})\right)>B_n\right\}}>1-\varepsilon_n
			\end{equation}
		\end{itemize}
		where $\lim\limits_{n\rightarrow 0} \varepsilon_n=0$. Then $\{\mathcal{H}_n\}$ is a strong SMA with parameters $\langle \mu,A_n,B_n,r_n \rangle$ and capacity at least $\frac{o(1)}{\sqrt{\varepsilon_n}}$, as $n$ goes to infinity.
	\end{theorem}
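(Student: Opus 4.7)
The plan is to lift the pointwise and pairwise guarantees in the hypothesis to the uniform-over-$S_n$ guarantees in the definition of SMA by a straightforward union bound, and then to read off the largest $K_n$ for which the resulting bound still tends to zero. Since the three hypotheses already fix an $\bm{x}$ (or a pair $\bm{x},\bm{y}$) and bound the failure probability by $\varepsilon_n$, the only task is to control how many bad events we must union-bound over as $\bm{x}$ (or the pair) ranges over $S_n$.

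First I would handle stability: fix an arbitrary $S_n \subset \{0,1\}^n \setminus \{\bm{x}:|\bm{x}|\le\kappa n\}$ with $|S_n|\le K_n$, and take a union bound over the $|S_n|$ events ``$|h(\bm{x})|\notin((1-\delta)r_n,(1+\delta)r_n)$'', getting total failure probability at most $K_n\varepsilon_n$. Continuity is analogous: the second hypothesis already states, for each fixed $\bm{x}$, that the Lipschitz condition holds against \emph{every} $\bm{y}\in\{0,1\}^n$ with probability $\ge 1-\varepsilon_n$, so a union bound over $\bm{x}\in S_n$ again yields failure probability $\le K_n\varepsilon_n$. For orthogonality, since we may freely assume pairwise separation $d_H(\bm{x},\bm{y})>A_n$ on $S_n$ (as noted after the definition), a union bound over the $\binom{|S_n|}{2}\le \tfrac12 K_n^2$ unordered pairs yields failure probability at most $\tfrac12 K_n^2\varepsilon_n$.

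Combining the three contributions, the probability that any of the three required properties fails on $S_n$ is at most
\begin{equation*}
2K_n\varepsilon_n + \tfrac{1}{2} K_n^2 \varepsilon_n .
\end{equation*}
Since $K_n\to\infty$, the quadratic term dominates, and the bound is $o(1)$ precisely when $K_n^2\varepsilon_n=o(1)$, i.e.\ $K_n = o(1/\sqrt{\varepsilon_n})$. For any such $K_n$ and any $\varepsilon>0$, one can then choose $N$ so that the failure probability is below $\varepsilon$ for all $n>N$ and all admissible $S_n$, which is exactly the definition of an SMA with capacity at least $o(1)/\sqrt{\varepsilon_n}$.

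There is no real obstacle here beyond bookkeeping; the only conceptually noteworthy point is that the pairwise orthogonality requirement contributes the $\binom{K_n}{2}$ factor, and it is this term, rather than the linear-in-$K_n$ stability/continuity terms, that determines the $\sqrt{\varepsilon_n}^{-1}$ scaling of the capacity. I would also briefly note that the statement's label ``strong SMA'' should be understood with $\delta$ from the first hypothesis, since nothing in the argument collapses the interval $((1-\delta)r_n,(1+\delta)r_n)$ to a single value of $r_n$.
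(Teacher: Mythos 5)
Your proposal is correct and follows essentially the same route as the paper: a union bound over the $|S_n|$ single-point events for stability and continuity and over the $O(K_n^2)$ pairs for orthogonality, with the quadratic orthogonality term forcing $K_n=o(1/\sqrt{\varepsilon_n})$ (the paper sets $K_n=\xi_n/\sqrt{\varepsilon_n}$ for an arbitrary $\xi_n\to 0$). Your closing remark about the ``strong SMA'' wording is a fair observation about a looseness in the statement rather than a gap in the argument.
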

	The proof of this theorem estimates the overall error probability for a given set of items via union bound, the technical detail of which is deferred to the Appendix.
	\subsection{A simple example for SMA}
	The following simple construction serves as a proof for the existence of SMA. The basic idea is that we first randomly select $2r_n$ bits from the input item, then flip each of these bits with probability $\frac{1}{2}$. 
	
	Suppose that $\left\{i_1,i_2,\cdots,i_{2r_n}\right\}$ is uniformly distributed on all the $2r_n$-sized subsets of $\left\{1,2,\cdots,n\right\}$, and that $b_1,b_2,\cdots,b_{2r_n} \overset{i.i.d.}{\sim} \mathcal{B}\left(1,\frac{1}{2}\right)$. We define $h(\bm{x})$ along with its distribution $\mathcal{H}_n$ as follows, with operation $\oplus$ denoting the modulo 2 addition.
	\begin{equation}
	h(\bm{x})_k= \left\{
	\begin{array}{cc}
	b_j \oplus \bm{x}_k & k=i_j\\
	0 & otherwise 
	\end{array}
	\right.
	\end{equation}
	
	We argue that $\forall a,p,\delta\in (0,1)$, the construction above is an SMA with parameter $\langle \delta,0, 1 ,A_n=a\cdot n, (1-\delta) p A_n ,r_n=p\cdot n\rangle$ and capacity at least $o(1) e^{2pa^2\delta^2n}$, according to Theorem~\ref{lower bound}.
	
	Apparently, the flips would not change Hamming distance and naturally meet the requirement of continuity. As
	$\forall \bm{x} \in \{0,1\}^n$,$\vert h(\bm{x})\vert \sim \mathcal{B}\left(2r_n,\frac{1}{2}\right)$, we know stability by using Chernoff bound. For orthogonality, let $d=d_H(\bm{x},\bm{y})$. According to Chv{\'a}tal's estimation on tail probability  of hypergeometric distribution \cite{chvatal1979tail}, we have $\mathop{Pr}_{h\sim\mathcal{H}_n}\left\{ d_H(h(\bm{x}),h(\bm{y}))\leq (1-\delta)\frac{2r_n}{n}d \right\} \leq e^{-4r_n (\frac{d}{n})^2 \delta^2}$

	However, this example is still far from actual memorization mechanisms, since it could not be fit into a biologically plausible neural realization.
	
	\section{Theoretical analysis}
	In this section, we will consider the theoretical bounds for capacity and error probability of SMA, which are two important characteristics to judge its performance other than those parameters that define it. Lower bound of error probability and upper bound of capacity are first proposed. We also consider a different scheme where the memory allocator can be learned from input, and establish upper and lower bounds for the capacity. 
	\subsection{Lower bound of pairwise error probability}
	Lower bounds for pairwise error probability is important, since it indicates what capacity bound we can get via Theorem~\ref{lower bound}. In this subsection, we consider a slightly modified scheme, where the orthogonality condition is strengthened in Lipschitz form, and combined with continuity condition to constitute a bi-Lipschitz condition on the input set, as in Valiant's paper \cite{hippo}.
	\begin{theorem}\label{error probability lower bound}
		If $\{\mathcal{H}_n\}$ is a series of distribution over functions that satisfies the following requirements
		\begin{itemize}
			\item $\forall h \in$ supp$(\mathcal{H}_n)$, $\frac{|h(\bm{x})|}{r_n} \in (1-\delta,1+\delta)$
			\item $\forall \bm{x}, \bm{y} \in \{0,1\}^{n}$fixed, with probability at least $1-\varepsilon_n$:
			\begin{equation}\lambda d_H(\bm{x},\bm{y})\leq d_H(h(\bm{x}),h(\bm{y}))\leq \mu d_H(\bm{x},\bm{y})\end{equation}
		\end{itemize}
		then we have the lower bound for error 
		\begin{equation}\varepsilon_n = \Omega \left(\left[2 r_n \delta \binom{n}{r_n(1+\delta)}\right]^{-(\frac{\mu-\lambda}{\mu+\lambda})^2 \frac{1}{log n}} \right) 
\end{equation}
	\end{theorem}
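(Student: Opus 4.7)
The plan is to turn the per-pair failure probability $\varepsilon_n$ into an upper bound on the size of any subset of $\{0,1\}^n$ that admits a bi-Lipschitz embedding into the restricted output alphabet, and then close the loop with a packing estimate. Throughout the sketch, write $\epsilon := (\mu-\lambda)/(\mu+\lambda)$ and $M_n := 2r_n\delta\binom{n}{(1+\delta)r_n}$.

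First, I would observe that the stability hypothesis (first bullet) forces every output $h(\bm{x})$ to lie in the union of Hamming slices of weight in $((1-\delta)r_n,(1+\delta)r_n)$, a set of cardinality at most $M_n$. This is the effective alphabet into which any image $h(S)$ must pack. Next comes an averaging / graph-theoretic step: for any candidate set $S \subseteq \{0,1\}^n$ of size $N$, linearity of expectation applied to the per-pair hypothesis shows that the expected number of pairs in $\binom{S}{2}$ violating the bi-Lipschitz inequality under $h \sim \mathcal{H}_n$ is at most $\binom{N}{2}\varepsilon_n$. Fix such an $h$; the associated ``bad-pair graph'' on $S$ has at most that many edges, so a standard Tur\'an-type bound yields an independent set $S' \subseteq S$ of size at least $N/(1+N\varepsilon_n)$. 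On $S'$ every pair satisfies $\lambda d_H(\bm{x},\bm{y})\leq d_H(h(\bm{x}),h(\bm{y}))\leq \mu d_H(\bm{x},\bm{y})$, so $h|_{S'}$ is a genuine $(\lambda,\mu)$-bi-Lipschitz embedding of $S'$ into an alphabet of size $M_n$.

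The decisive step is a packing bound of the following flavour: any $(\lambda,\mu)$-bi-Lipschitz subset of the output alphabet has cardinality at most $M_n^{O(\epsilon^2/\log n)}$. Intuitively, relative to any anchor image $h(\bm{x}_0)$, an input at distance $d$ is confined to a thin annulus of relative width $\epsilon$ around radius $\tfrac{\lambda+\mu}{2}d$; iterating the volumetric count across the $\Theta(\log n)$ dyadic distance scales between $1$ and $n$ furnishes the $1/\log n$ factor in the exponent, while the $\epsilon^2$ factor is the customary JL-style quadratic loss when one translates small multiplicative distortion into packing entropy in the Hamming setting. This is the main obstacle of the proof: making the chaining across scales tight inside the Hamming slice requires a careful isoperimetric/annulus-volume estimate, and any looseness at a single scale propagates directly to the exponent of the final bound.

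Finally, combining: take $S = \{0,1\}^n$ so $N = 2^n$; in the relevant regime $N\varepsilon_n\gg 1$, whence the Tur\'an step delivers $|S'| \geq 1/(2\varepsilon_n)$, while the packing step forces $|S'|\leq M_n^{O(\epsilon^2/\log n)}$. Equating the two estimates and solving for $\varepsilon_n$ yields the stated $\varepsilon_n = \Omega\bigl(M_n^{-\epsilon^2/\log n}\bigr)$, which is exactly the claim once $\epsilon$ is expanded as $(\mu-\lambda)/(\mu+\lambda)$.
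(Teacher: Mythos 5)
Your reduction to a fixed function is where the argument breaks. After the averaging/Tur\'an step you are left with a single $h$ and a single set $S'$ on which $h$ is $(\lambda,\mu)$-bi-Lipschitz, and your ``decisive'' packing lemma asserts that any such $S'$ has size at most $M_n^{O(\epsilon^2/\log n)}$. That lemma is false as stated: take $S'$ to be (a large subset of) the slice $T_n^{(r_n)}$ itself and $h$ the identity map on it; this is a bi-Lipschitz embedding into the output alphabet with distortion $1$ (so $\epsilon=0$), yet $|S'|$ can be of order $M_n$, not $M_n^{O(\epsilon^2/\log n)}$. The true obstruction in the theorem is not that some fixed map is bi-Lipschitz on some large set --- that is easy --- but that a \emph{random} $h$ must, with per-pair failure probability $\varepsilon_n$, approximately preserve the distance of \emph{every} pair of the whole cube, at all distance scales, while its output carries only $\log M_n$ bits. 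Your Tur\'an extraction discards exactly this structure (the surviving $S'$ may consist of points for which distance preservation is trivial, e.g.\ points already in the slice), so no contradiction with a volumetric bound can follow; and the claimed $\epsilon^2/\log n$ exponent, which you attribute to an annulus-volume/chaining computation, is precisely the hard quantitative content that such a computation does not deliver.

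The paper's proof takes a different and essentially unavoidable route: it uses the hypothesis as a one-way public-coin communication protocol. Alice and Bob share $h\sim\mathcal{H}_n$, Alice sends $h(\bm{x})$, which by the stability bullet can be encoded in $K=\log\bigl(2r_n\delta\binom{n}{r_n(1+\delta)}\bigr)$ bits, and Bob outputs $\frac{2}{\mu+\lambda}d_H(h(\bm{x}),h(\bm{y}))$, a $(1\pm\frac{\mu-\lambda}{\mu+\lambda})$-approximation of $d_H(\bm{x},\bm{y})$ with failure probability $\varepsilon_n$. The Jayram--Woodruff lower bound $K=\Omega\bigl((\frac{\mu+\lambda}{\mu-\lambda})^2\log\frac{1}{\varepsilon_n}\log n\bigr)$ then gives the stated bound on $\varepsilon_n$. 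Note that both the $\log n$ factor (coming from handling all distance scales) and the $\log\frac{1}{\varepsilon_n}$ dependence live inside that communication-complexity theorem; to make your approach work you would in effect have to reprove it, so you should either invoke it directly as the paper does or supply a genuinely new argument for your packing claim restricted to the correct class of ``hard'' configurations --- the claim in its current generality cannot be repaired.
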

	Basically, the theorem states information-theoretic limitations on a random mapping which can compress $\{0,1\}^n$ to a much smaller one while preserving distance. The proof is based on reduction to one-way public-coin randomized communication complexity, which is deferred to Appendix.
	\subsection{Upper bound of capacity}
	The following theorem gives an upper bound for the capacity when strong stability holds. This results mainly captures the limitations against achieving orthogonality posed by stability requirement.
	\begin{theorem}\label{upperbound for capacity}
		For an SMA with parameters $\langle 0,0,\mu,A_n,B_n,r_n\rangle$, if  $ max \big\{|S_n|\big|\forall \bm{x},\bm{y} \in S_n, d_H(\bm{x},\bm{y})>A_n\big\} \geq K_n$, then the capacity is upper bounded with:
		\begin{equation}\log K_n \leq \left(H(\alpha)-\alpha H(\beta)-(1-\alpha)H\left(\frac{\alpha\beta}{1-\alpha}\right)+o(1)\right)n 
		\end{equation}
		where $\alpha=\frac{r_n}{n}$, $\beta=\frac{B_n}{4r_n}$ and $H(\cdot)$ indicates entropy.
	\end{theorem}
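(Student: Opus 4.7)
The plan is to convert the capacity bound into a sphere-packing estimate for constant-weight binary codes in the Johnson scheme. Strong stability ($\delta=0$) forces every $h(\bm x)$ to lie in $T_n^{(r_n)}$, so I identify $h(\bm x)$ with its support, an $r_n$-subset of $[n]$.

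\textbf{Extracting a constant-weight code.} I pick an input set $S_n$ of size $K_n$ with pairwise Hamming distances all exceeding $A_n$, as the hypothesis guarantees (subsampling the maximum witness if needed). By the SMA definition, for any small $\varepsilon$ and $n$ large, the stability and orthogonality events each have probability at least $1-\varepsilon$, so a union bound yields a realization $h\in\mathrm{supp}(\mathcal{H}_n)$ for which $|h(\bm x)|=r_n$ on all of $S_n$ and $d_H(h(\bm x),h(\bm y))>B_n$ for every distinct pair. Consequently $\mathcal{C} := \{h(\bm x):\bm x\in S_n\}$ is a constant-weight binary code of length $n$, weight $r_n$, cardinality $K_n$ (the elements are distinct since $B_n\ge 1$), and minimum distance exceeding $B_n$. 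It then suffices to upper bound the cardinality of such a code.

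\textbf{Sphere packing in the Johnson scheme.} Set $e := \lfloor (B_n-1)/4\rfloor$ and, for each $C\in\mathcal{C}$, define the Johnson ball $\mathcal{B}(C) := \{D\in\binom{[n]}{r_n}: |C\triangle D|\le 2e\}$. Any $D\in\mathcal{B}(C)\cap\mathcal{B}(C')$ with $C\ne C'$ would force $|C\triangle C'|\le 4e\le B_n-1$ by the triangle inequality, contradicting the minimum distance. Hence the balls are pairwise disjoint subsets of $\binom{[n]}{r_n}$, each of size $\sum_{j=0}^{e}\binom{r_n}{j}\binom{n-r_n}{j}\ge \binom{r_n}{e}\binom{n-r_n}{e}$ (keeping only the $j=e$ term, which is the largest whenever $e\le\min\{r_n,n-r_n\}/2$). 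Disjointness therefore yields
\begin{equation*}
K_n \binom{r_n}{e}\binom{n-r_n}{e} \;\le\; \binom{n}{r_n}.
\end{equation*}
Taking logarithms and invoking the Stirling estimate $\log\binom{m}{\xi m}= m H(\xi) + O(\log m)$ with $\xi=\alpha$, then $\xi = e/r_n\sim \beta$, and finally $\xi = e/(n-r_n)\sim \alpha\beta/(1-\alpha)$ produces
\begin{equation*}
\log K_n \;\le\; n H(\alpha) - r_n H(\beta) - (n-r_n)\,H\!\left(\tfrac{\alpha\beta}{1-\alpha}\right) + o(n),
\end{equation*}
which is the stated bound after factoring $n$.

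The main technical point is the choice of radius $e=\lfloor(B_n-1)/4\rfloor$: a larger $e$ destroys disjointness via the triangle inequality, while a smaller $e$ shrinks each ball and weakens the bound, so $e\sim B_n/4$ is essentially tight; this explains the factor $4$ in the definition $\beta = B_n/(4r_n)$. A side condition to verify is $e\le\min\{r_n,n-r_n\}/2$, which ensures that the $j=e$ term dominates the ball-volume sum; this is automatic, since otherwise $B_n\ge 2r_n$ would make orthogonality unattainable (two $r_n$-subsets have symmetric difference at most $2r_n$). The remaining Stirling and rounding errors contribute only $O(\log n)$ and are absorbed into the $o(1)$ term.
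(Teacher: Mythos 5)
Your proof is correct and follows essentially the same route as the paper's: extract a witness input set and a realization $h$ from the support, view the images as a constant-weight code in $T_n^{(r_n)}$, pack disjoint Johnson balls of radius $\approx B_n/4$ swaps, lower-bound each ball by its largest term, and finish with Stirling, which is exactly where the $\beta = B_n/(4r_n)$ comes from in both arguments. Your handling of the details (union bound for the existence of $h$, injectivity of the code, the floor/rounding, and the side condition $e\le r_n/2$) is slightly more careful than the paper's but does not change the approach.
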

	This theorem also discussed information-theoretic limitations on stable memory allocator, from a different perspective: the output space can only contain a limited number of "truly different" objects. The proof uses "sphere packing" arguments, which is deferred to the Appendix.
	\subsection{Theretical results for data-dependent allocators}
	Till now, our discussions are data-independent without exploring whether the memory allocator can adjust according to data. We focus in this part on the theoretical possibilities for a data-dependent memory allocator which can learn from items. Though it is impossible for hippocampus to be aware of all the items to encounter beforehand, our theoretical analysis in data-dependent setting reveals the possibilities and limits for online learning in memory allocation, where the memory allocator can adjust its mapping function after seeing each instance. In the following we propose both upper and lower bounds for the capacity in the data-dependent setting.\par
	The definition of SMA indicates that, for each fixed input set with limited size, there is at least one allocator $h$ that satisfies the three conditions regardless of the distribution $\mathcal{H}_n$ on the function space. Therefore, $\forall S_n, \exists$ $h$ dependent on $S_n$ such that $\forall \bm{x},\bm{y}\in S_n, d_H(\bm{x},\bm{y})>A_n \Rightarrow d_H(h(\bm{x}),h(\bm{y})\geq B_n$. An equivalent description of Theorem~\ref{upperbound for capacity} from a data-dependent perspective is as follows:\par
	If an allocator function is strong stable, then the number of "truly different" items it can discriminate would be upper bounded with:
	\begin{equation} \exp\left\{ \left(H(\alpha)-\alpha H(\beta)-(1-\alpha)H\left(\frac{\alpha\beta}{1-\alpha}\right)+o(1)\right)n \right\}\end{equation}
	Conversely, we consider that for a given set of "truly different" items $S_n$, whether an SMA that guarantees strong stability, orthogonality and uniform Lipschitz continuity exists. The following theorem guarantees the existence of data-dependent memory allocator with high capacity. We defer the technical proof to Appendix.\\
	\begin{theorem} \label{existence of hashing function}
		Given input set $S_n \subset \left\{0,1\right\}^n$ that satisfies $\forall \bm{x},\bm{y}\in S_n, d_H(\bm{x},\bm{y})> A_n$. If $\log{\vert S_n \vert }\leq \frac{1}{2} \left(H(\alpha)-\alpha H(\beta)-(1-\alpha)H \left(\frac{\alpha\beta}{1-\alpha}\right)\right)n$, then there exists a function $h_n: \left\{0,1\right\}^n\rightarrow \{0,1\}^n $ such that
		\begin{equation}
		\begin{split} 
		\forall \bm{x} \in S_n, h(\bm{x}) \in T_n^{(r_n)}; \forall \bm{x},\bm{y}\in S_n, d_H(h_n(\bm{x}),h_n(\bm{y}))> B_n;\\
		\forall \bm{x} \in S_n, \forall \bm{y} \in  \{0,1\}^n, d_H(h_n(\bm{x}),h_n(\bm{y})) \leq \frac{8 r_n}{A_n} d_H(\bm{x},\bm{y})
		\end{split}
		\end{equation}
		where $r_n=|S_n|$, $\alpha=\frac{r_n}{n}$, $\beta =\frac{B_n}{2r_n}$ and $H(\cdot)$ is the entropy.
	\end{theorem}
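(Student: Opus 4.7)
The plan is to construct $h_n$ in two stages: first define $h_n$ on $S_n$ via the probabilistic method so that strong stability and pairwise orthogonality hold, and then extend $h_n$ to all of $\{0,1\}^n$ by a nearest-neighbor rule that automatically yields the Lipschitz condition.

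For the first stage I would draw $h_n(\bm{x})$ independently and uniformly from $T_n^{(r_n)}$ for every $\bm{x}\in S_n$. Since any $u,v\in T_n^{(r_n)}$ satisfy $d_H(u,v)=2(r_n-|u\cap v|)$, a collision event $d_H(h_n(\bm{x}),h_n(\bm{y}))\le B_n$ has probability at most
\[
p \;=\; \frac{1}{\binom{n}{r_n}}\sum_{k=0}^{B_n/2}\binom{r_n}{k}\binom{n-r_n}{k}.
\]
Using the standard entropy estimate $\log\binom{m}{\ell}=mH(\ell/m)+O(\log m)$ with $\alpha=r_n/n$ and $\beta=B_n/(2r_n)$, both factors are controlled and I obtain $\log p\le -nE+o(n)$ with $E=H(\alpha)-\alpha H(\beta)-(1-\alpha)H\!\left(\tfrac{\alpha\beta}{1-\alpha}\right)$. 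Union-bounding over the $\binom{|S_n|}{2}\le \tfrac12 2^{nE}$ pairs (using the hypothesis $\log|S_n|\le \tfrac{nE}{2}$) gives an expected number of bad pairs that is $o(1)$, so a realization with no bad pairs exists. This furnishes $h_n\colon S_n\to T_n^{(r_n)}$ with $|h_n(\bm{x})|=r_n$ and $d_H(h_n(\bm{x}),h_n(\bm{y}))>B_n$ for all distinct $\bm{x},\bm{y}\in S_n$.

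For the second stage, extend $h_n$ by setting $h_n(\bm{y})=h_n(\bm{x}^\ast)$ for $\bm{y}\notin S_n$, where $\bm{x}^\ast\in\arg\min_{\bm{x}'\in S_n} d_H(\bm{x}',\bm{y})$ (ties broken arbitrarily). Stability and orthogonality on $S_n$ are untouched. To check the Lipschitz inequality for fixed $\bm{x}\in S_n$ and arbitrary $\bm{y}\in\{0,1\}^n$, I split into cases. If $\bm{y}\in S_n$ then $d_H(\bm{x},\bm{y})>A_n$ and the trivial bound $d_H(h_n(\bm{x}),h_n(\bm{y}))\le 2r_n$ gives $(8r_n/A_n)\,d_H(\bm{x},\bm{y})>8r_n\ge 2r_n$. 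If $\bm{y}\notin S_n$ and $\bm{x}^\ast=\bm{x}$, then $d_H(h_n(\bm{x}),h_n(\bm{y}))=0$ and nothing is to check. If $\bm{x}^\ast\ne\bm{x}$, the triangle inequality and the nearest-neighbor property yield $A_n<d_H(\bm{x},\bm{x}^\ast)\le d_H(\bm{x},\bm{y})+d_H(\bm{y},\bm{x}^\ast)\le 2 d_H(\bm{x},\bm{y})$, so $d_H(\bm{x},\bm{y})>A_n/2$; combined with $d_H(h_n(\bm{x}),h_n(\bm{x}^\ast))\le 2r_n$ this gives $(8r_n/A_n)\,d_H(\bm{x},\bm{y})>4r_n\ge d_H(h_n(\bm{x}),h_n(\bm{y}))$, as required.

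The main obstacle is the first-stage collision estimate: one must carefully bound the ball volume $V$ in the Johnson-like set $T_n^{(r_n)}$ and show that the polynomial-in-$n$ corrections from Stirling and the outer sum over $k$ are absorbed by the $o(n)$ slack implicit in the asymptotic hypothesis. The factor $\tfrac12$ in the statement is precisely the square-root loss from a pairwise union bound over $\binom{|S_n|}{2}$ events, which is the Gilbert--Varshamov-type regime; the extension step is then essentially a $1$-Lipschitz perturbation argument, so no additional loss is introduced there.
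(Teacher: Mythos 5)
Your first stage coincides with the paper's: both draw the map uniformly from all functions $S_n\rightarrow T_n^{(r_n)}$, bound the pairwise collision probability by $\sum_{k\leq B_n/2}\binom{r_n}{k}\binom{n-r_n}{k}\big/\binom{n}{r_n}$, convert it to the entropy expression via Stirling, and union-bound over the $\binom{|S_n|}{2}$ pairs; like the paper, you absorb the polynomial corrections (Stirling prefactors and the sum over $k$) into an asymptotic slack, which you at least flag explicitly, so you are no less rigorous than the original there. The extension stage, however, is genuinely different. The paper regards $\{0,1\}^n$ with the metric $\sqrt{d_H}$ as a subset of Euclidean space, applies the Kirszbraun extension theorem to $\omega\circ\hat{h}_n$ (Lipschitz constant $\sqrt{2r_n/A_n}$), and then rounds the real-valued extension back to $\{0,1\}^n$ coordinatewise, which costs a constant and yields the factor $8r_n/A_n$. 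You instead snap every $\bm{y}\notin S_n$ to its nearest neighbour $\bm{x}^\ast\in S_n$; because the theorem only requires the Lipschitz bound for pairs with one endpoint in $S_n$, and $S_n$ is $A_n$-separated, your case analysis is correct: whenever $h_n(\bm{y})\neq h_n(\bm{x})$ the triangle inequality forces $d_H(\bm{x},\bm{y})>A_n/2$, while two points of $T_n^{(r_n)}$ are at Hamming distance at most $2r_n$, so you in fact achieve the sharper constant $4r_n/A_n\leq 8r_n/A_n$. What the Kirszbraun route buys is a map with a global Lipschitz-type property (in the $\sqrt{d_H}$ metric) between arbitrary pairs, not only pairs anchored in $S_n$; what your route buys is elementarity, no Hilbert-space machinery, and a better constant, at the price of an extension that is locally constant and therefore carries no information away from $S_n$. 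Both establish the stated theorem.
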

	Notice that the number of input items in Theorem~\ref{existence of hashing function} is approximately the square root of the upper bound indicated by Theorem~\ref{upperbound for capacity}. The lower bound is strong in this sense.
	\section{A feedforward neural network for SMA}
	In this section, we propose a feedforward neural network model that realizes SMA functionality. Compared with Valiant's previous neural circuits \cite{hippo}, not only does our construction achieve better performance of SMA, with capacity guarantees, but also it is biologically more reasonable, in terms of number of synapse connected to a neuron and their strengths.\par
	\subsection{Biologically reasonable assumptions}
	We adopt the randomly-connected feed-forward network architecture and unmodifiable synapse in Valiant's paper \cite{hippo}. Actually, as Valiant has suggested in his paper \cite{hippo}, there are also experimental evidences supporting this architecture. The information flow within hippocampus appears to be uni-directional, with much less reciprocal connections \cite{andersen2006hippocampus}; it has also been reported that both modifiable and unmodifiable synapse exists in the hippocampus \cite{debanne1999heterogeneity}. As in Valiant's paper, our theory focuses on explaining the mechanism of memory allocation component, with conservative assumptions on neurons, though the overall infrastructure of hippocampus can be much more complicated.\par
	A major difference we made in the design of neural circuit is that, the inhibition patterns of neurons are divisive, instead of subtractive. Whether the way of inhibition is subtractive or divisive is an important open question in neuroscience \cite{doiron2001subtractive,mejias2014subtractive}. In divisive inhibition model, each neuron is activated if and only if the ratio of its total positive synaptic inputs $\sum_{+} W_i$ to the sum of all the synaptic weights that come from input neurons $(\sum_{+} W_i) +(\sum_{-} W_i)$ is larger than the threshold value $C$, i.e. $\frac{\sum_{+} W_i}{\sum_{+} W_i +\sum_{-} W_i}>C$. For the purpose of memory allocation, subtractive models \cite{hippo} have to narrow down the range of density in each layer and use contraction mapping theorem for stability. Divisive inhibition, on the contrary, enables us to control the expected level of activity in second layer directly, for a wide range of input density.\par
	\subsection{Description of the network}
	Our neural realization is based on a unidirectional three-layer network architecture. For simplicity we assume each layer has $n$ neurons. Firing neurons in the first layer represents the input vector $\bm{x}\in\{0,1\}^n$, and the output $h(\bm{x})$ is taken from the third layer. For $k\in\{1,2\}$, the edges from $k$-th layer to $(k+1)$-th layer is randomly and independently connected: each edge exists with probability $2p$, and the chances for it to become positive and inhibitive are both $p$. The threshold value in divisive inhibition is denoted as $C_1$ and $C_2$, for second and third layer neurons respectively.
	\subsection{Proof of SMA functionality}
	In this subsection, we give proofs on the validity of our construction as a stable memory allocator, with reasonable parameters and capacity.\par
	We first give the following technical lemmas needed in our analysis. Lemma~\ref{middle layer stablity}, ~\ref{middle one-bit change probablity}, ~\ref{CLT second-layer change probability} and ~\ref{CLT third-layer change probability} described the consequences of passing through one layer in our network model. We defer the proofs for the lemmas and theorem to Appendix.
	\begin{lemma}\label{middle layer stablity}
		For $\forall \bm{x}\in\left\{0,1\right\}^n$ fixed, let $\beta\in \mathbb{R}^n$ with all entries $i.i.d.$, $Pr\{\beta_i=1\}=Pr\{\beta_i=-1\}=p$, $Pr\{\beta_i=0\}=1-2p$. Then we have:
		\begin{equation}0\leq \frac{1}{2}-\mathop{Pr}\{\beta^T\bm{x}>0\}\leq \frac{1}{2\sqrt{\pi p |\bm{x}|}}+\frac{1}{2}e^{-2(2\ln 2-1)p|\bm{x}|}
		\end{equation}
	\end{lemma}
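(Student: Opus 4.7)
The inequality has two halves. The lower bound $\frac{1}{2}-\Pr\{\beta^T\bm{x}>0\}\geq 0$ is immediate from symmetry: each $\beta_i$ is symmetrically distributed about $0$, hence so is $S := \beta^T\bm{x}$. This gives $\Pr\{S>0\}=\Pr\{S<0\}$, and combined with $\Pr\{S>0\}+\Pr\{S=0\}+\Pr\{S<0\}=1$ we obtain $\tfrac{1}{2}-\Pr\{S>0\}=\tfrac{1}{2}\Pr\{S=0\}\geq 0$, so the lower bound will reduce to bounding $\Pr\{S=0\}$ from above.

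The work therefore lies in showing $\Pr\{S=0\}\leq 1/\sqrt{\pi p m}+e^{-2(2\ln 2-1)pm}$, with $m=|\bm{x}|$. I would first condition on $K$, the number of indices in the support of $\bm{x}$ at which $\beta_i\neq 0$. Then $K\sim\mathrm{Binomial}(m,2p)$, and conditionally on $K=k$ the sum $S$ is distributed as a sum of $k$ i.i.d.\ Rademacher signs, so $\Pr\{S=0\mid K=k\}=\binom{k}{k/2}2^{-k}$ for even $k\geq 2$, vanishes for odd $k$, and equals $1$ when $k=0$. Stirling's bound $\binom{2j}{j}\leq 4^j/\sqrt{\pi j}$ then yields $\Pr\{S=0\mid K\}\leq\sqrt{2/(\pi K)}$ on $\{K\geq 1\}$.

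Next I would split $\Pr\{S=0\}=\mathbb{E}_K[\Pr\{S=0\mid K\}]$ at a threshold $k_0$ located near the mean $2pm$. On $\{K\geq k_0\}$ the Stirling estimate contributes at most $\sqrt{2/(\pi k_0)}$; taking $k_0$ essentially equal to $2pm$ matches the target leading term $1/\sqrt{\pi pm}$. On $\{K<k_0\}$ a Chernoff/MGF bound for the lower tail of the binomial will produce the exponentially decaying term. The precise rate $2(2\ln 2-1)=4\ln 2-2$ is meant to emerge by minimizing $sk_0+m\log(1-2p+2pe^{-s})$ over $s>0$ at the matching threshold and evaluating the resulting KL-type exponent.

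The principal obstacle is that placing $k_0$ exactly at the mean makes the naive Chernoff tail trivial ($\Theta(1)$), so the decomposition has to be refined. One option is a finer Stirling expansion that absorbs a thin buffer around the mean into the leading term; a cleaner alternative is to switch to the Fourier inversion $\Pr\{S=0\}=\frac{1}{2\pi}\int_{-\pi}^{\pi}(1-2p(1-\cos t))^m\,dt$ and split at $|t|=\tau$: the interior region yields the Gaussian term using $1-\cos t\geq t^2/2-O(t^4)$, while the complement supplies the exponential correction after choosing $\tau$ so that $2(1-\cos\tau)=2(2\ln 2-1)$. Once the correct decomposition is in place, the remaining algebra is a routine Stirling/Chernoff computation.
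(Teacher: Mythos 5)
Your symmetry reduction and the conditioning on $K\sim\mathrm{Binomial}(|\bm{x}|,2p)$ coincide with the paper's setup (the paper simply writes the same decomposition as the explicit sum $\sum_i\binom{|\bm{x}|}{2i}\binom{2i}{i}p^{2i}(1-2p)^{|\bm{x}|-2i}$), but the gap is exactly the one you flag and you do not close it: with the cut at $k_0\approx 2p|\bm{x}|$ the event $\{K<k_0\}$ has probability about $\tfrac12$, and on it $\Pr\{S=0\mid K\}$ can be as large as $1$ (e.g.\ $K=0$), so no exponential term can be extracted from the lower tail at the mean; correspondingly, the rate cannot ``emerge by minimizing'' a lower-tail MGF evaluated at the mean, since that optimization gives exponent $0$, and your first repair (``a finer Stirling expansion absorbing a thin buffer'') is not an argument. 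The paper's resolution is to cut at \emph{twice} the mean rather than at the mean: the exponential term is the upper tail, $\Pr\{K>4p|\bm{x}|\}\le \exp\bigl(-|\bm{x}|\,D(4p\,\|\,2p)\bigr)\approx e^{-2(2\ln 2-1)p|\bm{x}|}$ (the $\ln 2$ in the constant is precisely the signature of a factor-two threshold), obtained after bounding $\binom{2i}{i}\le 2^{2i}$ there; on the central range $1\le K\le 4p|\bm{x}|$ Stirling is used only where $K> 2p|\bm{x}|$, and the troublesome small-$K$ terms, where $1/\sqrt{\pi K}$ blows up, are handled by monotonicity of the binomial pmf below its mean: each term with index $i\le p|\bm{x}|$ is dominated by the shifted term with index $i+p|\bm{x}|$, so the whole central sum is at most $\frac{2}{\sqrt{\pi p|\bm{x}|}}\cdot\frac12=\frac{1}{\sqrt{\pi p|\bm{x}|}}$. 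That upper-tail-plus-pairing step is the missing idea in your decomposition.

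Your Fourier fallback, by contrast, is a genuinely different and workable route: from $\Pr\{S=0\}=\frac{1}{2\pi}\int_{-\pi}^{\pi}\bigl(1-2p(1-\cos t)\bigr)^{|\bm{x}|}\,dt$, cutting at $\tau$ with $1-\cos\tau=2\ln 2-1$ (so $\tau\approx 0.91$) makes the exterior at most $e^{-2(2\ln 2-1)p|\bm{x}|}$ (valid in the relevant regime of small $p$, here $p=n^{-\gamma}$, where $1-2p(1-\cos t)\ge 0$), while the interior is at most $\frac{1}{2\pi}\int_{\mathbb{R}}e^{-p|\bm{x}|t^2(1-\tau^2/12)}\,dt$, which is below $\frac{1}{\sqrt{\pi p|\bm{x}|}}$ with a comfortable margin; halving then gives the stated bound on $\tfrac12-\Pr\{\beta^T\bm{x}>0\}$. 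If you carry that computation out, you obtain the lemma by an argument arguably cleaner than the paper's direct binomial estimate; as written, however, neither of your two sketches is complete, and the binomial-split version needs the fix described above.
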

	The following lemma estimates how one-bit difference in the input influence the output.
	\begin{lemma}\label{middle one-bit change probablity}
		For $\forall \bm{x},\bm{y}\in\left\{0,1\right\}^n$ fixed, with $d_H(\bm{x},\bm{y})=1$, let $\beta\in \mathbb{R}^n$ with all entries $i.i.d.$, $Pr\{\beta_i=1\}=Pr\{\beta_i=-1\}=p$, $Pr\{\beta_i=0\}=1-2p$. For $\forall \bm{v}\in\{0,1\}^n$, let  $g(\bm{v})=I[\beta^T\bm{v}]>0$. Then we have:
		\begin{equation}Pr\left\{g(\bm{x})\neq g(\bm{y})\right\}\leq 2\sqrt{\frac{p}{\pi |\bm{x}|}}+2pe^{-2(2\ln 2-1)p|\bm{x}|}
		\end{equation}
	\end{lemma}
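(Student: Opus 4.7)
The plan is to reduce this one-bit-flip probability to a point-mass estimate for the symmetric lattice walk $\sum_i \beta_i$, then invoke Lemma~\ref{middle layer stablity} through the symmetry identity $Pr\{S=0\} = 1 - 2\,Pr\{S>0\}$. First I would locate the single coordinate $k$ where $\bm{x}$ and $\bm{y}$ disagree and let $T = \{i : x_i = y_i = 1\}$; writing $S = \sum_{i \in T} \beta_i$, which is independent of $\beta_k$, one has $\{\beta^T\bm{x}, \beta^T\bm{y}\} = \{S, S+\beta_k\}$. A one-line case check on $\beta_k \in \{-1,0,+1\}$ shows $g(\bm{x}) \neq g(\bm{y})$ iff $(\beta_k, S) \in \{(+1,0),(-1,1)\}$, so
\begin{equation}
Pr\{g(\bm{x}) \neq g(\bm{y})\} \;=\; p\,\bigl(Pr\{S=0\} + Pr\{S=1\}\bigr).
\end{equation}

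Next, since $S$ is a sum of i.i.d.\ symmetric variables, $Pr\{S=0\} = 1 - 2\,Pr\{S>0\}$, and applying Lemma~\ref{middle layer stablity} to the indicator vector of $T$ (weight $|T|$) and doubling gives
\begin{equation}
Pr\{S=0\} \;\leq\; \frac{1}{\sqrt{\pi p\,|T|}} + e^{-2(2\ln 2 - 1)p\,|T|}.
\end{equation}
To convert the sum $Pr\{S=0\} + Pr\{S=1\}$ into $2\,Pr\{S=0\}$, I would use the Fourier inversion identity
\begin{equation}
Pr\{S=0\} - Pr\{S=1\} \;=\; \frac{1}{2\pi}\int_{-\pi}^{\pi} (1 - \cos t)\,(1 - 2p + 2p\cos t)^{|T|}\,dt,
\end{equation}
whose integrand is nonnegative in the biologically relevant regime $p \le 1/4$ (i.e.\ total edge density $2p \ll 1$), giving $Pr\{S=1\} \leq Pr\{S=0\}$. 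Combining with $|T| \ge |\bm{x}| - 1$ and absorbing the off-by-one into the $|\bm{x}|$ of the statement yields
\begin{equation}
Pr\{g(\bm{x}) \neq g(\bm{y})\} \;\leq\; 2p\cdot Pr\{S=0\} \;\leq\; 2\sqrt{\tfrac{p}{\pi |\bm{x}|}} + 2p\,e^{-2(2\ln 2 - 1)p\,|\bm{x}|}.
\end{equation}

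The main obstacle is the pointwise comparison $Pr\{S=1\} \leq Pr\{S=0\}$: while intuitively the symmetric lattice walk has its peak mass at $0$, a clean proof needs either the Fourier identity above (restricted to $p \le 1/4$) or a term-by-term multinomial inspection of $Pr\{S=k\}$ for $k \in \{0,1\}$. Everything else is a direct manipulation: Step~1 is elementary because flipping a single input bit moves the inner product by exactly $\beta_k$ and the event of $g$-flipping is pinned down by the position of $S$ relative to the threshold $0$, and Step~2 is Lemma~\ref{middle layer stablity} post-composed with the symmetry of $S$.
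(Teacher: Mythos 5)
Your proposal is correct and follows essentially the same route as the paper: the paper likewise reduces the event $g(\bm{x})\neq g(\bm{y})$ to the two cases $\langle\beta_q=1,\beta^T\bm{x}=0\rangle$ and $\langle\beta_q=-1,\beta^T\bm{x}=1\rangle$ and then invokes the point-mass estimate underlying Lemma~\ref{middle layer stablity}. The one place you diverge is the term $Pr\{S=1\}$: the paper disposes of it by a direct Stirling/Chernoff computation parallel to the $Pr\{S=0\}$ case, whereas you compare it to $Pr\{S=0\}$ through the characteristic-function identity; that identity is correct, and the restriction $p\le 1/4$ it needs for nonnegativity of the integrand is harmless in context (the construction takes $p=n^{-\gamma}\to 0$), though it makes your version of the lemma slightly less general than the statement, which carries no hypothesis on $p$. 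The remaining loose end, that $|T|=|\bm{x}|-1$ when the flipped coordinate has $x_k=1$, is an off-by-one that the paper's own ``without loss of generality'' step glosses over in exactly the same way, so it is not a substantive gap.
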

	For a more general case, we cannot write the probability of changed bits in closed form, while its asymptotic behavior can be estimated using central limit theorem, as stated in the following two lemmas.
	\begin{lemma}\label{CLT second-layer change probability}
		For $\forall \bm{x},\bm{y}\in\left\{0,1\right\}^n$ fixed, with $d_H(\bm{x},\bm{y})=L$, let $\beta\in \mathbb{R}^n$ with all entries $i.i.d.$, $Pr\{\beta_i=1\}=Pr\{\beta_i=-1\}=p$, $Pr\{\beta_i=0\}=1-2p$. For $\forall \bm{v}\in\{0,1\}^n$, let  $g(\bm{v})=I[\beta^T\bm{v}]>0$. Then we have:
		\begin{equation}Pr\left\{g(\bm{x})\neq g(\bm{y})\right\}\approx\frac{1}{\pi}\cos^{-1}\frac{|\bm{x}\cap \bm{y}|}{\sqrt{|\bm{x}|\cdot |\bm{y}|}}
		\end{equation}
	\end{lemma}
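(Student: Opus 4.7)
The plan is to reduce the problem to the classical sign-disagreement formula for correlated Gaussians via the bivariate central limit theorem. First note that the event $\{g(\bm{x}) \neq g(\bm{y})\}$ differs from $\{\mathrm{sign}(\beta^T\bm{x}) \neq \mathrm{sign}(\beta^T\bm{y})\}$ only on the tie events $\{\beta^T\bm{x} = 0\}$ and $\{\beta^T\bm{y} = 0\}$, each of which carries probability $O(|\bm{x}|^{-1/2})$ by the same Stirling/Chernoff argument used in Lemma~\ref{middle layer stablity}. These can be absorbed into the $\approx$ of the statement, so it suffices to estimate the sign-disagreement probability.

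Next I would identify the limiting Gaussian. Since $\beta_i$ is symmetric with $\mathbb{E}[\beta_i] = 0$ and $\mathrm{Var}(\beta_i) = 2p$, a direct calculation using $x_i, y_i \in \{0,1\}$ gives
\[
\mathrm{Var}(\beta^T\bm{x}) = 2p|\bm{x}|, \quad \mathrm{Var}(\beta^T\bm{y}) = 2p|\bm{y}|, \quad \mathrm{Cov}(\beta^T\bm{x},\beta^T\bm{y}) = \sum_i x_iy_i\,\mathrm{Var}(\beta_i) = 2p|\bm{x}\cap\bm{y}|,
\]
so the Pearson correlation of the pair is exactly $\rho = |\bm{x}\cap\bm{y}|/\sqrt{|\bm{x}|\,|\bm{y}|}$. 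The multivariate Lindeberg CLT (or a quantitative Berry-Esseen bound for half-space probabilities, e.g.\ Bentkus' theorem) then shows that after normalization $(\beta^T\bm{x},\beta^T\bm{y})$ converges in distribution to a centered bivariate normal $(Z_1,Z_2)$ with this correlation.

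Finally I would invoke the classical Goemans--Williamson identity
\[
\Pr\{\mathrm{sign}(Z_1) \neq \mathrm{sign}(Z_2)\} = \frac{1}{\pi}\cos^{-1}(\rho),
\]
whose geometric proof writes $(Z_1,Z_2)$ as the projections of an isotropic pair of independent standard Gaussians onto two unit vectors at angle $\cos^{-1}(\rho)$; the sign-disagreement event then corresponds to the uniformly random direction of the underlying isotropic Gaussian falling into a sector of total angular measure $2\cos^{-1}(\rho)$, giving the formula.

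The main obstacle is the quantitative control of the CLT approximation along the boundary of the sign-disagreement region, which passes through the origin. A Berry-Esseen-type bound for convex sets gives an approximation error of order $|\bm{x}|^{-1/2} + |\bm{y}|^{-1/2}$, which is of the same order as the tie-probability term and justifies the $\approx$ in the asymptotic regime $|\bm{x}|, |\bm{y}| \to \infty$ used throughout this section. The edge cases where $\rho$ is very close to $\pm 1$ require separate comment, but in that regime both sides of the claimed equation are $o(1)$ and the tie contribution already handles the residual gap.
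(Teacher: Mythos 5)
Your proposal is correct and follows essentially the same route as the paper's proof: a bivariate CLT reduction of $(\beta^T\bm{x},\beta^T\bm{y})$ to a centered Gaussian pair with correlation $|\bm{x}\cap\bm{y}|/\sqrt{|\bm{x}|\,|\bm{y}|}$ followed by the closed-form quadrant (arccosine) identity, with a Berry--Esseen bound controlling the approximation error; the paper merely justifies the joint CLT via the independent decomposition $\beta^T(\bm{x}\cap\bm{y})$, $\beta^T(\bm{x}-\bm{x}\cap\bm{y})$, $\beta^T(\bm{y}-\bm{x}\cap\bm{y})$, which is equivalent to your direct covariance computation. The only minor discrepancy is that the tie and CLT error terms scale as $O(1/\sqrt{p|\bm{x}|})$ rather than $O(|\bm{x}|^{-1/2})$ (cf.\ Lemma~\ref{middle layer stablity} and the paper's Berry--Esseen remark), which is still $o(1)$ in the regime $p=n^{-\gamma}$, $|\bm{x}|\geq s_0 n$ and so does not affect the stated $\approx$.
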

	\begin{lemma}\label{CLT third-layer change probability}
		For $n$ independent and identically distributed pairs $\{(x_i,y_i)\}_{i=1}^{n}$ with $Pr\{x_i=1\}=Pr\{y_i=1\}\approx \frac{1}{2}$ and $Pr\{x_i\neq y_i\}=\eta$, we concatenate them and get two random vectors $\bm{x},\bm{y}\in \{0,1\}^n$. Let $\beta\in \mathbb{R}^n$ with all entries $i.i.d.$, $Pr\{\beta_i=1-c\}=Pr\{\beta_i=-c\}=p$, $Pr\{\beta_i=0\}=1-2p$. For $\forall \bm{v}\in\{0,1\}^n$, let  $g(\bm{v})=I[\beta^T\bm{v}]>0$. We have $Pr\{g(\bm{x})\neq g(\bm{y})\}$ is approximately equal to:
		\begin{equation}
		\frac{2}{\sqrt{1-\frac{1}{2}\eta}}\int_{\mathbb{R}}{\varphi(\frac{t}{\sqrt{1-\frac{1}{2}\eta}})\Phi(\frac{-1}{\sqrt{\eta}}\big|\sqrt{2}t-\frac{(2c-1)p\sqrt{n}}{\sqrt{v(c,p)}}\big|)}dt
		\end{equation}
		($\varphi(\cdot)$ and $\Phi(\cdot)$ denotes the density and cumulative density function for $\mathcal{N}(0,1)$, respectively)\\
		($v(c,p)=(2c^2-2c+1)p-(1-2c)^2p^2$ is variance of $\beta_i$)
	\end{lemma}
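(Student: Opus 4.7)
The plan is to apply the bivariate central limit theorem to $(S_x,S_y):=(\beta^T\bm{x},\beta^T\bm{y})$ and then reduce the resulting orthant probability to the claimed one-dimensional integral by a Gaussian rotation.

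First, I would decompose $S_x=T_A+T_B$ and $S_y=T_A+T_C$, where $A=\{i:x_i=y_i=1\}$, $B=\{i:x_i=1,y_i=0\}$, $C=\{i:x_i=0,y_i=1\}$, and $T_A,T_B,T_C$ are independent sums of i.i.d.\ $\beta_i$'s over the corresponding index sets. Since the pairs $(x_i,y_i)$ are i.i.d.\ with joint law determined by $\eta$, standard concentration gives $|A|\approx n(1-\eta)/2$ and $|B|\approx|C|\approx n\eta/2$ with overwhelming probability, which I absorb into ``$\approx$''. The bivariate CLT then yields $(S_x,S_y)\approx\mathcal{N}((\mu,\mu),\Sigma)$ with $\mu=\tfrac{n}{2}(1-2c)p$, $\Sigma_{11}=\Sigma_{22}=\tfrac{n}{2}v(c,p)$ and $\Sigma_{12}=\tfrac{n(1-\eta)}{2}v(c,p)$; the correlation is $\rho=1-\eta$.

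Let $Z_x=(S_x-\mu)/\sqrt{nv(c,p)/2}$ and $Z_y$ analogously, so that the target probability equals $\Pr\{Z_x>M,Z_y\leq M\}+\Pr\{Z_x\leq M,Z_y>M\}$ with $M=-\mu/\sqrt{nv(c,p)/2}=(2c-1)p\sqrt{n/(2v(c,p))}$, giving the key identity $\sqrt{2}M=(2c-1)p\sqrt{n}/\sqrt{v(c,p)}$. Diagonalizing by $Z_x=\sqrt{1-\eta/2}\,U+\sqrt{\eta/2}\,V$ and $Z_y=\sqrt{1-\eta/2}\,U-\sqrt{\eta/2}\,V$ with $U,V$ independent standard Gaussians, a short case analysis collapses the two sign-crossing events into the single event $|V|>|M-\sqrt{1-\eta/2}\,U|/\sqrt{\eta/2}$, whose conditional probability given $U=u$ is $2\Phi(-|M-\sqrt{1-\eta/2}\,u|/\sqrt{\eta/2})$.

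Integrating against the density of $U$ and substituting $t=\sqrt{1-\eta/2}\,u$ (so $du=dt/\sqrt{1-\eta/2}$), together with the rewrite $-|M-t|/\sqrt{\eta/2}=-|\sqrt{2}t-\sqrt{2}M|/\sqrt{\eta}$ and the value of $\sqrt{2}M$ above, yields exactly the displayed integral. The main technical obstacle is controlling the CLT error uniformly around the threshold $M$, which itself scales like $\sqrt{n}$: a Berry--Esseen estimate applied conditionally on the typical values of $|A|,|B|,|C|$, combined with a union bound over the mild concentration event for these sizes, delivers the Gaussian approximation consistent with the ``$\approx$'' in the statement.
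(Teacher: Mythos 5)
Your proposal is correct and follows essentially the same route as the paper's proof: split the coordinates into common and differing bits (your $A,B,C$ is the paper's $w,\tilde{x},\tilde{y}$), apply the bivariate CLT with Berry--Esseen error control, and express the resulting quadrant probability as the stated univariate integral. Your explicit rotation $Z_x=\sqrt{1-\eta/2}\,U+\sqrt{\eta/2}\,V$, $Z_y=\sqrt{1-\eta/2}\,U-\sqrt{\eta/2}\,V$ simply carries out the reduction that the paper asserts without detail, and your identification $\sqrt{2}M=(2c-1)p\sqrt{n}/\sqrt{v(c,p)}$ matches the displayed formula.
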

	By combining them together, we conclude the functionality of neural stable memory allocator:
	\begin{theorem}\label{neural SMA}
		Given $r_n,A_n=a\cdot n,s_0$, $\forall \gamma\in(0,1)$, we construct the neural network with parameters: $p=\frac{1}{n^{\gamma}}$, and $C_1=\frac{1}{2}$, $C_2=\frac{1}{2}-
		\frac{s}{2\sqrt{2-s^2}}$, where $s=\Phi^{-1}\left(\frac{r_n}{n}\right)\sqrt{\frac{2}{np}}$. Let $\mathcal{H}$ denote the mapping function defined by this neural network. There exists a constant $b$ depending solely on $a$, and $\mu_n=Z_0r_ns_0^{-\frac{1}{4}}n^{-\frac{1+\gamma}{4}}(\log n)^2$ with some absolute constant $Z_0$, such that $\forall \bm{x},\bm{y},\bm{z}\in \{0,1\}^n$with $|\bm{x}|,|\bm{y}|,|\bm{z}|\geq s_0n$, and $d_H(\bm{x},\bm{z})>A_n)$, the following hold if $n$ is large enough:
		\begin{equation}
		\begin{split}
		&\mathop{Pr}_{h\sim \mathcal{H}}\left\{\frac{|h(\bm{x})-r_n|}{n}>\epsilon\right\}\leq \exp\left\{-2n\left(\epsilon-\frac{\log(n/r_n)}{\sqrt{s_0n^{1-\gamma}}}\right)^2\right\}\\
		&\mathop{Pr}_{h\sim \mathcal{H}}\left\{\frac{d_H(h(\bm{x}),h(\bm{y}))}{\sqrt{d_H(\bm{x},\bm{y})}}\geq t\mu_n \right\}\leq \exp\left\{-(t\ln t-(t-1))\mu_n\sqrt{d_H(\bm{x},\bm{y})}\right\}.\quad \forall t\geq 1\\
		&\mathop{Pr}_{h\sim\mathcal{H}}\left\{d_H(h(\bm{x}),h(\bm{z}))\leq  (b-\epsilon)r_n\right\}	\leq e^{-2n\epsilon^2}
		\end{split}
		\end{equation}
	\end{theorem}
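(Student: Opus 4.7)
The proof of all three claims follows a common two-step template. Conditional on the second-layer pattern $\bm{v} = g(\bm{x})$ (and $\bm{v}' = g(\bm{y})$ when relevant), the $n$ output-layer weight vectors are mutually independent, so quantities such as $|h(\bm{x})|$ or $d_H(h(\bm{x}),h(\bm{y}))$ become sums of $n$ conditionally independent Bernoullis. I first use Lemmas~\ref{middle layer stablity}--\ref{middle one-bit change probablity} to place the second-layer pattern in a good regime (Hamming weight near $n/2$, Hamming distance between $g(\bm{x})$ and $g(\bm{y})$ close to the mean predicted by Lemma~\ref{CLT second-layer change probability}), then use Lemmas~\ref{CLT second-layer change probability}--\ref{CLT third-layer change probability} to compute the per-output-neuron statistics on that event, and finally close with a scalar Hoeffding or Bennett bound across the $n$ output neurons. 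The threshold $C_2$ is calibrated so that the Gaussian approximation underlying Lemma~\ref{CLT third-layer change probability} gives single-neuron activation probability exactly $r_n/n$: plugging $|\bm{v}|=n/2$ into the corresponding formula, $\sqrt{n/2}\,(1-2C_2)p/\sqrt{v(C_2,p)} = \sqrt{np/2}\,s = \Phi^{-1}(r_n/n)$.

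\textbf{Stability.} For $|\bm{x}| \geq s_0 n$, Lemma~\ref{middle layer stablity} makes each second-layer neuron fire with probability $\tfrac12 + O(1/\sqrt{p|\bm{x}|})$, so Hoeffding places $|\bm{v}|/n$ within $O(\log(n/r_n)/\sqrt{s_0 n^{1-\gamma}})$ of $1/2$. On that event the per-output-neuron firing probability $q(|\bm{v}|)$ is within the same order of $r_n/n$ by the calibration of $C_2$, and a second Hoeffding across the $n$ conditionally independent output neurons yields the stated tail with the correction term inside the exponent.

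\textbf{Continuity.} Lemma~\ref{CLT second-layer change probability} gives per-second-layer-bit flip probability $\eta \approx \pi^{-1}\cos^{-1}(|\bm{x}\cap\bm{y}|/\sqrt{|\bm{x}||\bm{y}|})$, and a Taylor expansion of $\cos^{-1}$ at $1$ gives $\eta \lesssim \sqrt{d_H(\bm{x},\bm{y})/(s_0 n)}$. A Bernstein inequality places the empirical second-layer Hamming distance near $\eta n$. Feeding this $\eta$ into Lemma~\ref{CLT third-layer change probability} and expanding the integral asymptotically near $\eta = 0$ at the calibrated $C_2$ shows the per-output-bit flip probability is a polylog multiple of $(r_n/n)\sqrt{\eta}$, hence a polylog multiple of $(r_n/n)(d_H(\bm{x},\bm{y})/(s_0 n))^{1/4}$. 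Summing over the $n$ conditionally independent output neurons gives expected output Hamming distance $\mu_n\sqrt{d_H(\bm{x},\bm{y})}$, and the Chernoff/Bennett tail $\Pr\{X \geq t\lambda\} \leq \exp\{-(t\ln t - (t-1))\lambda\}$ for a sum of Bernoullis with mean $\lambda = \mu_n\sqrt{d_H(\bm{x},\bm{y})}$ yields the Poisson-type tail.

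\textbf{Orthogonality and main obstacle.} For $d_H(\bm{x},\bm{z}) > an$ and $|\bm{x}|,|\bm{z}| \geq s_0 n$, a direct estimate shows $|\bm{x}\cap\bm{z}|/\sqrt{|\bm{x}||\bm{z}|}$ is bounded away from $1$ by a quantity depending only on $(a,s_0)$, so Lemma~\ref{CLT second-layer change probability} yields $\eta \geq \eta_0(a,s_0) > 0$ and Bernstein puts the second-layer Hamming distance above $\eta_0 n/2$ except with probability $e^{-\Omega(n)}$. On that event Lemma~\ref{CLT third-layer change probability} gives per-output-bit flip probability at least $b\cdot r_n/n$ for some $b = b(a) > 0$, and Hoeffding across the $n$ conditionally independent output neurons yields the stated $e^{-2n\epsilon^2}$-type tail. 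The chief technical obstacle is the continuity step: one must carry the $\sqrt{\eta}$ dependence of the third-layer flip probability through both stochastic layers to obtain the precise exponent $-(1+\gamma)/4$ in $\mu_n$, which requires a non-asymptotic analysis of the integral in Lemma~\ref{CLT third-layer change probability} uniformly over the range of $\eta$ produced by second-layer concentration, together with simultaneous control of the good events in Lemmas~\ref{middle layer stablity}--\ref{CLT second-layer change probability} so that accumulated errors do not swamp the calibration of $C_2$ and $\mu_n$.
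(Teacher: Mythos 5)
Your overall architecture matches the paper's: calibrate $C_2$ so that the Gaussian estimate of a single output neuron's firing probability equals $r_n/n$, control the second layer, then close each claim with a Chernoff/Hoeffding/Bennett bound across the (conditionally) i.i.d.\ output neurons. Your stability and orthogonality arguments are essentially the paper's own (the paper works directly with the per-neuron firing/flip probabilities $s_1$ and $b_1$ and feeds them into Lemma~\ref{CLT third-layer change probability}, rather than first concentrating the realized second-layer count/distance, but that is a cosmetic difference).

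The continuity step, however, has a genuine gap, and it sits exactly where you flag the ``chief technical obstacle.'' You bound the per-second-layer-neuron flip probability by Lemma~\ref{CLT second-layer change probability} plus a Taylor expansion of $\cos^{-1}$ at $1$, getting $\eta\lesssim\sqrt{L/(s_0n)}$ with $L=d_H(\bm{x},\bm{y})$. This bound contains no trace of the synaptic sparsity $p=n^{-\gamma}$, so the exponent $-(1+\gamma)/4$ in $\mu_n$ cannot come out of it: plugging it into Lemma~\ref{CLT third-layer change probability} gives $1/\sqrt{\eta}\approx (s_0n)^{1/4}L^{-1/4}$, hence (after the same integral-splitting you would do) a per-output-bit flip probability of order $\log n\cdot (L/(s_0n))^{1/4}\varphi(\Phi^{-1}(r_n/n))$ and an expected output distance of order $r_nL^{1/4}s_0^{-1/4}n^{-1/4}$ up to polylogs, which exceeds the claimed mean $\mu_n\sqrt{L}\sim r_nL^{1/2}s_0^{-1/4}n^{-(1+\gamma)/4}$ by a factor $n^{\gamma/4}L^{-1/4}$ --- unbounded precisely in the few-bit-error regime $L\ll n^{\gamma}$ that continuity is meant to address. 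Moreover, Lemma~\ref{CLT second-layer change probability} cannot be sharpened to rescue this: for small $L$ the true flip probability is $O(L\sqrt{p/n})$, which lies below the Berry--Esseen error $O(1/\sqrt{pn})$ of the Gaussian approximation, so the CLT-based estimate is both an overestimate and unreliable at that scale. The paper instead chains $\bm{x}$ to $\bm{y}$ through $L$ one-bit changes and applies the exact combinatorial bound of Lemma~\ref{middle one-bit change probablity} to each step, obtaining $\eta=O\big(L\sqrt{p/(s_0n)}\big)$, i.e.\ $1/\sqrt{\eta}=(s_0n^{1+\gamma})^{1/4}/\sqrt{L}$; it is this $\sqrt{p}$ factor that produces both the $n^{-(1+\gamma)/4}$ in $\mu_n$ and the $\sqrt{d_H(\bm{x},\bm{y})}$ scaling in the theorem. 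Your proposal needs this chaining-plus-Lemma~\ref{middle one-bit change probablity} step (or an equally sharp non-CLT estimate) in place of the arccos expansion.
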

	Assume $r_n$ grows linearly with $n\rightarrow +\infty$, while $\epsilon, b$ remains absolutely constant, and $t=\Theta(n^{\delta})$ with small absolute constant $\delta$, the error probabilities in Theorem~\ref{neural SMA} diminish quasi-exponentially as $n$ increases. Combined with Theorem~\ref{lower bound}, we conclude that the neural stable memory allocator achieves a capacity which is exponential in $poly(n)$. The continuity requirement in this neural SMA deviates from standard definition: on the one hand, though the Lipschitz factor $t\mu_n$ grows unboundedly as $n\rightarrow+\infty$, it is much smaller than $r_n$ so that error-correcting functions can still be performed; on the other hand, the theorem provides more guarantees than Lipschitz constant, since the number of different output bits grows linearly with $\sqrt{d_H(\bm{x},\bm{y})}$ instead of $d_H(\bm{x},\bm{y})$, making it more robust for input errors of more than constant bits. 
	\subsection{Simulation results}
	In this subsection, we verify the theoretical guarantees via simulating the neural network. Numerical results show that, the neural stable memory allocator is consistent with theoretical predictions, and thus functions well as an SMA.\par
	We construct three-layer network as described in this section. Guided by Theorem~\ref{neural SMA}, parameters are set as follows:
	$$n=10^5,p=2.5\times 10^{-3},C_1=0.5,C_2=0.57$$
	To illustrate the stability of allocated neurons, we observe the proportion of activated neurons in second and third layer for input density ranging from 0.05 to 0.5, as plotted in Figure 1.a. The density of firing neurons is restricted in $(0.46,0.49)$ in the second layer, and further narrowed down in range $(0.014,0.017)$ in the third layer. By adjusting $C_2$ we can also control the density in output layer arbitrarily.
	\begin{figure}[H]
		\small
		\subfigure[Stability]{
			\centering
			\includegraphics[width=0.48\linewidth]{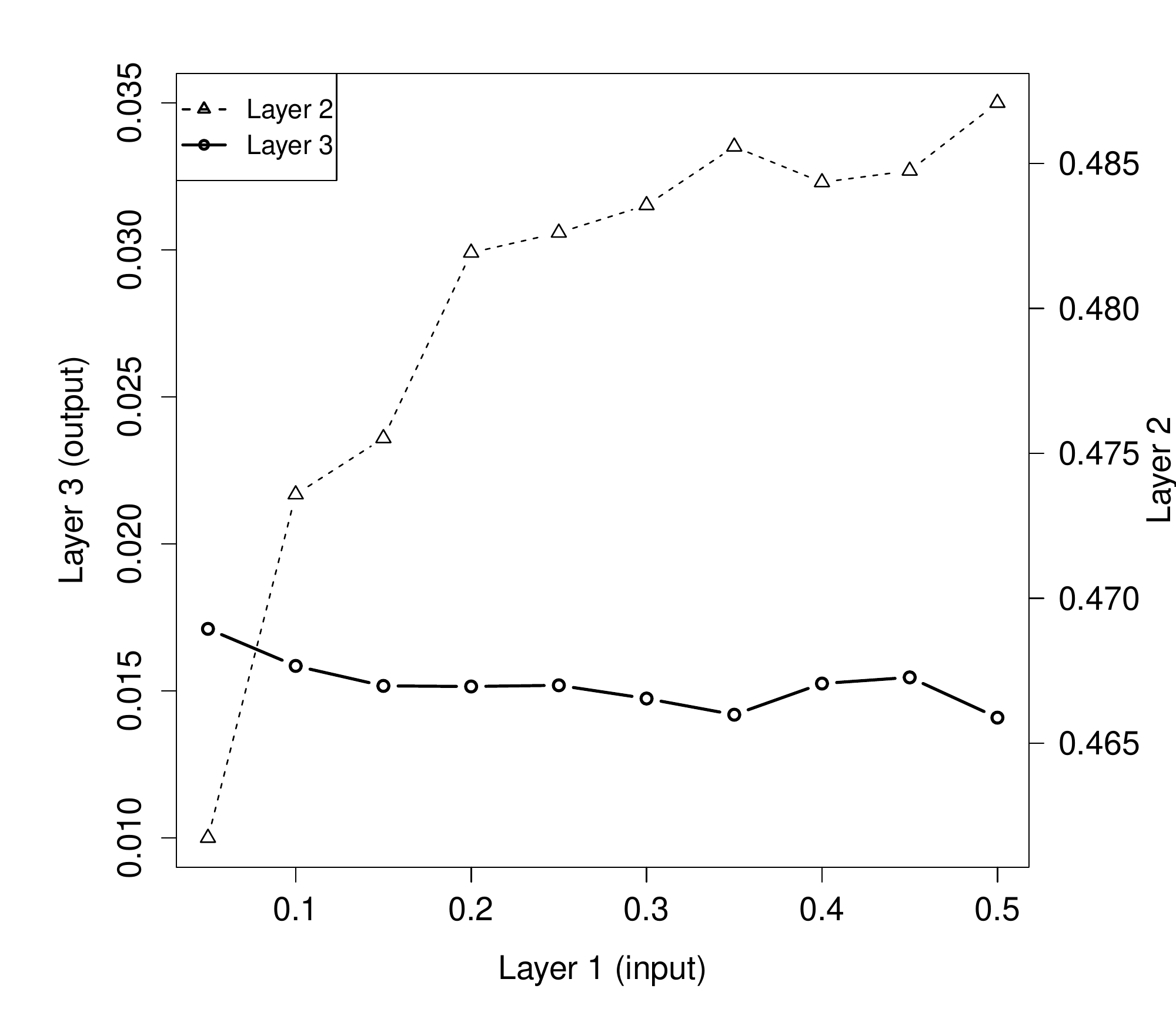}
		}
		\subfigure[Expansion rates]{
			\centering
			\includegraphics[width=0.47\linewidth]{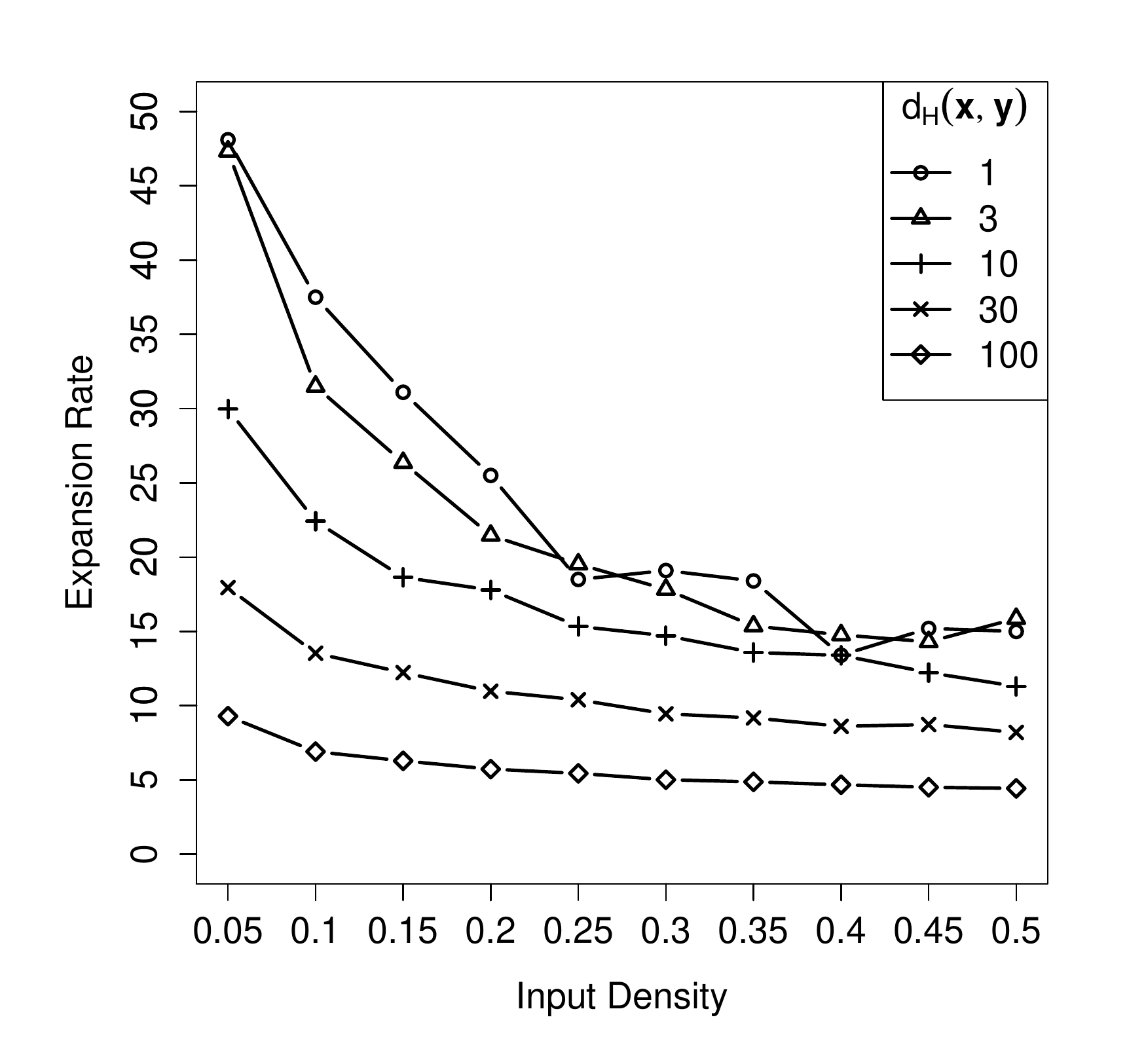}
		}
		\caption{Stability and expansion rate of the neural SMA}
	\end{figure}
	
	Figure 1.b shows the relationship between expansion rate $\frac{d_H(h(\bm{x}),h(\bm{y}))}{d_H(\bm{x},\bm{y})}$ and the proportion of activated input neurons, which demonstrates continuity and orthogonality properties of our memory allocator. The values shown in the figure is by taking average from $10$ randomly chosen inputs. Different curves stand for the behaviors of expansion rate for $d_H(\bm{x},\bm{y})=1,3,10,30,100$, respectively. As the figure shows, difference in inputs for a few bits will not cause the large Hamming distance between outputs, which verifies continuity. On the other hand, the expansion rate is lower bounded with some positive constant, which illustrates orthogonality. It is also worth noticing that expansion rate decreases as $d_H(\bm{x},\bm{y})$ increases, which is consistence with continuity result in Theorem~\ref{neural SMA}. Therefore, our experimental results verify the validity of Theorem~\ref{neural SMA}, and demonstrate the performance of our construction as a stable memory allocator.
	\section{Conclusion and open questions}
	In this paper we give definitions, fundamental limits and a neural realization for stable memory allocation in the hippocampus, under Valiant's neuroidal framework. We first push ahead rigorous formulations from the three properties of SMA that Valiant has proposed in his paper \cite{hippo}. Some important characteristics of SMAs in our definition such as capacity and error probability are discussed with their upper and lower bounds presented. We also explore the theoretical possibilities of memory allocators which can learn from input items.\par
	We explicitly construct a stable memory allocator based on a randomly-connected feed-forward neural network with biologically sound parameters. We use divisive inhibition of neurons in our model, which allows us to obtain arbitrary level of stability and preserves continuity better. Rigorous proofs are given for the performance of this SMA. We also verify our theory through computer simulations.\par
	There are a lot of promising future works we can do. By making full use of the three conditions in definition, we can possibly give tighter bounds for theoretical limits of SMAs. On the other hand, though we have constructed a neural network model that realizes SMA, it is far from optimal. The question whether we can construct a neural stable memory allocator with constant Lipschitz factor for continuity remains open. Besides, it is worthwhile to investigate more complicated computational functions of the hippocampus, such as the interaction between hippocampus and cortex, and the retrieval of allocated memory.

\section*{Appendix}

\subsection*{Proof of Theorem~\ref{lower bound}}
\begin{proof}
	We bound the overall error probability using pairwise error probabilities via union bound. For orthogonality, we have:
	\begin{align*}
	&\mathop{Pr}_{h\sim\mathcal{H}_n}{\left\{\forall \bm{x},\bm{y}\in S_n,d_H(\bm{x},\bm{y})>A_n, d_H(h(\bm{x}),h(\bm{y}))>B_n\right\}}\\
	=&1-\mathop{Pr}_{h\sim\mathcal{H}_n}{\left\{\exists \bm{x},\bm{y}\in S_n,d_H(\bm{x},\bm{y})>A_n, d_H(h(\bm{x}),h(\bm{y}))<B_n\right\}}\\
	=&1-\mathop{Pr}_{h\sim\mathcal{H}_n}{\left\{\bigcup_{\bm{x},\bm{y}\in S_n,d_H(\bm{x},\bm{y})>A_n} \{d_H(h(\bm{x}),h(\bm{y}))<B_n\}\right\}}\\
	\geq &1-\sum_{\bm{x},\bm{y}\in S_n, d_H(\bm{x},\bm{y})>A_n}\mathop{Pr}_{h\sim\mathcal{H}_n}{\left\{d_H(h(\bm{x}),h(\bm{y}))<B_n\right\}}\\
	\geq &1-\vert S_n\vert^2 \varepsilon_n
	\end{align*}
	Similarly, for continuity and stability we have:
	\begin{equation}\mathop{Pr}_{h\sim \mathcal{H}_n}\{ \forall \bm{x}\in S_n,\bm{y}\in \{0,1\}^n, d_H(\bm{x},\bm{y})\leq \mu d_H(\bm{x},\bm{y}) \}\geq 1-|S_n| \varepsilon_n\end{equation}
	\begin{equation}\mathop{Pr}_{h\sim \mathcal{H}_n}\{ \forall \bm{x}\in S_n, |h(\bm{x})|\in (r_n(1-\delta),r_n(1+\delta)) \}\geq 1-|S_n| \varepsilon_n\end{equation}
	For any sequence $\xi_n\rightarrow 0$, let $|S_n|=K_n=\xi_n\frac{1}{\sqrt{\varepsilon_n}}$. The probability of violating any of the requirements is $O(\xi_n)$, which goes to zero. Therefore, the capacity of this SMA is at least $\frac{o(1)}{\sqrt{\epsilon_n}}$.
\end{proof}

\subsection*{Proof of Theorem~\ref{error probability lower bound}}
\begin{proof}
	The proof is based on the lower bound of one-way randomized approximate communication complexity of Hamming distance by \cite{jlopt}. They showed that:
	If Alice has  vector $\bm{x}$ of length $n$, and Bob has vector $\bm{y}$ of length $n$, to estimate $d_H(\bm{x},\bm{y})$ within a factor of $1\pm \delta$ with probability at least $1-\varepsilon$, Alice has to send at least $\Omega \left( \frac{1}{\delta^2}\log\frac{1}{\varepsilon} log n\right)$ bits to Bob.
	
	We can use SMA to construct an one-way communication protocol: For a given SMA described in the theorem, a common hashing function $h\sim \mathcal{H}_n$ for Alice and Bob is sampled first. Upon receiving the vector $\bm{x}$, Alice computes $h(\bm{x})$ and sends it to Bob. Bob has vector $\bm{y}$ and gets $h(\bm{x})$, so he can compute  $d_H(h(\bm{x}),h(\bm{y}))$. 
	Because there are $\sum_{j=r_n(1-\delta)}^{r_n(1+\delta)} \binom{n}{j} \leq 2r_n\delta \binom{n}{r_n(1+\delta)}$ different possible values for $h(\bm{x})$, the number of bits needed to encode these values is at least:
	\begin{equation}\label{encode bit}
	K=\log\left(2r_n \delta \binom{n}{r_n(1+\delta)}\right)
	\end{equation} We can write the Lipschitz condition in normalized form:
	\begin{equation}\mathop{Pr}_{h\sim\mathcal{H}_n} \left((1-\frac{\mu-\lambda}{\mu+\lambda})  d_H(\bm{x},\bm{y}) \leq \frac{2}{\mu+\lambda}d_H(h(\bm{x}),h(\bm{y})) \leq (1+\frac{\mu-\lambda}{\mu+\lambda}) d_H(\bm{x},\bm{y}) \right) > 1-\varepsilon_n 
	\end{equation}
	By plugging into the result of \cite{jlopt}, we have 
	\begin{equation}\label{one way communication complexity}
	K=\Omega \left( (\frac{\mu+\lambda}{\mu-\lambda})^2 log \frac{1}{\varepsilon_n} log n \right)
	\end{equation}
	By combining equation~(\ref{encode bit})(\ref{one way communication complexity}), we have 
	\begin{equation}\varepsilon_n = \Omega \left(\big[2 r_n \delta \binom{n}{r_n(1+\delta)}\big]^{-(\frac{\mu-\lambda}{\mu+\lambda})^2 \frac{1}{log n}} \right) 
	\end{equation} 
\end{proof}

\subsection*{Proof of Theorem~\ref{upperbound for capacity} }
\begin{proof}
	For a strong SMA with $A_n$ not too large, $ max \big\{|S_n|\big|\forall \bm{x},\bm{y} \in S_n, d_H(\bm{x},\bm{y})>A_n\big\} \geq K_n$, we can select set of input items $S_n \subseteq \{0,1\}^n$ that satisfies $|S_n|=K_n$ and $\forall \bm{x},\bm{y} \in S_n, d_H(\bm{x},\bm{y})>A_n$. According to the definition of strong SMA, there exists a hashing function $h$ such that $\forall \bm{x} \in S_n, h(\bm{x}) \in T_n^{(r_n)} $ and $\forall \bm{x}, \bm{y} \in S_n, d_H(\bm{x},\bm{y})>B_n$.
	
	Let $U(\bm{x},r)$ denote $\left\{\bm{x'}\in T_n^{(r_n)}\vert d_H(\bm{x},\bm{x}')\leq r\right\}$, we have
	\begin{equation}\forall \bm{x},\bm{y} \in S_n, U\left(h(\bm{x}),\frac{B}{2}\right)\cap U\left(h(\bm{y}),\frac{B}{2}\right)=\varnothing
	\end{equation}
	Thus, we have
	\begin{equation}
	\sum_{\bm{x}\in S_n}{\big| U\left(h(\bm{x}),\frac{B_n}{2}\right) \big|}\leq\vert T_n^{(r_n)}\vert= \binom{n}{r_n}
	\end{equation}
	To estimate the number of elements in set $\big| U\left(h(\bm{x}),\frac{B_n}{2}\right) \big|$, notice that to guarantee $h(\bm{x}) \in T_n^{(r_n)}$, we can change $k \leq B_n/4$ bits at most from 0 to 1 and from 1 to 0 at the same time. 
	Therefore,\begin{equation} \left| U\left(h(\bm{x}),\frac{B_n}{2}\right) \right| = \sum_{k=1}^{\frac{B_n}{4}} \binom{r_n}{k}\cdot \binom{n-r_n}{k}\end{equation}
	Also, \begin{equation}B_n \leq d_H(h(\bm{x}),h(\bm{y})) \leq 2r_n \Longrightarrow k \leq \frac{B_n}{4} \leq \frac{r_n}{2}
	\end{equation}
	The sparsity of output suggests that $r_n \ll n$, hence
	\begin{equation} \label{monotonous}
	\frac{\binom{r_n}{k+1}\cdot \binom{n-r_n}{k+1}}{\binom{r_n}{k}\cdot \binom{n-r_n}{k}}=\frac{1}{(k+1)^2} (r_n-k)(n-r_n-k) \geq \frac{1}{(\frac{r_n}{2}+1)^2} \cdot \frac{r_n}{2}(n-r_n-\frac{r_n}{2}) \gg 1
	\end{equation}
	and \begin{equation}\binom{r_n}{\frac{B_n}{4}}\cdot \binom{n-r_n}{\frac{B_n}{4}} \leq \big| U\left(h(\bm{x}),\frac{B_n}{2}\right) \big|\end{equation} is a good estimation for $\big| U\left(h(\bm{x}),\frac{B_n}{2}\right) \big|$ ,and we have
	\begin{equation} \label{bound for S_n}
	|S_n| \leq \frac{\binom{n}{r_n}}{\binom{r_n}{\frac{B_n}{4}}\cdot \binom{n-r_n}{\frac{B_n}{4}}}
	\end{equation}
	As $n,r_n$ and $B_n$ are large, we can use Stirling's formula $\log n!=n\log n-n+O(\log n)$ to approximate $\binom{n}{r_n}$:
	\begin{equation}log \binom{n}{r_n}= (-\frac{r_n}{n}log\frac{r_n}{n}-(1-\frac{r_n}{n})log(1-\frac{r_n}{n})+o(1))n= (H(\alpha)+o(1))n\end{equation}Similar results can be derived for $\binom{r_n}{\frac{B_n}{4}}$ and $\binom{n-r_n}{\frac{B_n}{4}}$. By using inequality~(\ref{bound for S_n}), we get
	\begin{equation}\log K_n \leq \left(H(\alpha)-\alpha H(\beta)-(1-\alpha)H\left(\frac{\alpha\beta}{1-\alpha}\right)+o(1)\right)n \end{equation}
	where $\alpha=\frac{r_n}{n}$, $\beta=\frac{B_n}{4r_n}$ and $H(p)=-p\log p-(1-p)\log{(1-p)}$.
\end{proof}

\subsection*{Proof of Theorem~\ref{existence of hashing function}}
\begin{proof}
	Let $\mathcal{U}$ denotes the uniform distribution on all the function that maps from $S_n$ to $ T_n^{(r_n)}$. $\tilde{h}_n\sim \mathcal{U}$ indicates that  given $\bm{x},\bm{y}$, $\tilde{h}_n(\bm{x})$ and $\tilde{h}_n(\bm{y})$ are independently and uniformly distributed on $T_n^{(r_n)}$. We have 
	\begin{align*}
	&\mathop{Pr}_{\tilde{h}_n\sim \mathcal{U}}\left\{\exists \bm{x}\neq \bm{y}\in S_n, d_H(\tilde{h}_n(\bm{x}),\tilde{h}_n(\bm{y})) \leq B_n\right\}\\
	=& \mathop{Pr}_{\tilde{h}_n\sim \mathcal{U}}\left\{\bigcup_{\bm{x}\neq \bm{y}
		\in S} \{d_H(\tilde{h}_n(\bm{x}),\tilde{h}_n(\bm{y})) \leq B_n\}\right\}\\
	\leq& \sum_{\bm{x}\neq \bm{y}\in S_n}{\mathop{Pr}_{\tilde{h}_n\sim \mathcal{U}}\left\{d_H(\tilde{h}_n(\bm{x}),\tilde{h}_n(\bm{y})) \leq B_n \right\}}\\
	<& \frac{|S_n|^2}{2} \mathop{Pr}_{\tilde{h}_n\sim \mathcal{U}}\left\{d_H(\tilde{h}_n(\bm{x}),\tilde{h}_n(\bm{y})) \leq B_n \right\}\\
	< & \frac{|S_n|^2}{2} \frac{\sum_{k=0}^{\frac{B_n}{2}} \binom{r_n}{k}\cdot \binom{n-r_n}{k}} {\binom{n}{r_n}}
	\end{align*}
	From equation (\ref{monotonous}) in the proof of Theorem~\ref{upperbound for capacity}, we know that 
	\begin{equation}\sum_{k=0}^{\frac{B_n}{2}} \frac{\binom{r_n}{k}\cdot \binom{n-r_n}{k}}{ \binom{n}{r_n}} \leq 2 \frac{\binom{r_n}{\frac{B_n}{2}} \cdot \binom{n-r_n}{\frac{B_n}{2}}} {\binom{n}{r_n}}\end{equation}
	By using the condition in the statement
	\begin{equation}\log{\vert S_n \vert }\leq \frac{1}{2} \left(H(\alpha)-\alpha H(\beta)-(1-\alpha)H \left(\frac{\alpha\beta}{1-\alpha}\right)+o(1)\right)n,\end{equation}
	we derive 
	\begin{equation}\begin{split}
	&\mathop{Pr}_{\tilde{h}_n\sim \mathcal{U}}\left\{\exists \bm{x}\neq \bm{y}\in S_n, d_H(\tilde{h}_n(\bm{x}),\tilde{h}_n(\bm{y})) \leq B_n\right\} <1\\
	 \Rightarrow& \exists \hat{h}_n: S_n \rightarrow  T_n^{(r_n)}, \forall \bm{x},\bm{y}\in S_n, d_H(\hat{h}_n(\bm{x}),\hat{h}_n(\bm{y}))> B_n
	\end{split}
	\end{equation}
	Notice that $\hat{h}_n$ has Lipschitz constant $\frac{2r_n}{A_n}$ on $S_n$. To extend $\hat{h}_n$ to $h_n$ defined on $\{0,1\}^n$, we use Kirszbraun theorem \cite{kirszbraun}. 
	
	Consider the Euclidean space $R^n$ and its subspace $W^{n-1}=\{\bm{x} \in R^{n-1}| \sum_{i=1}^{n} x_i =0\}$. The relationship between Euclidean distance and Hamming distance is that $d(\bm{x},\bm{y})= \sqrt{d_H(\bm{x},\bm{y})}, \forall \bm{x},\bm{y} \in S_n$. Let $\omega$ denote the translation $\omega(\bm{x})=\bm{x}-\frac{r_n}{n} \bm{1}_n$, then we have
	$\omega\circ \hat{h}_n: S_n \rightarrow W_{n-1}$ is a  Lipschitz-continuous map from the subset $S_n$ of Hilbert space $(R^n,d)$ to Hilbert space $(W_{n-1},d)$ with Lipschitz constant $\sqrt{\frac{2r_n}{A_n}}$. According to Kirszbraun theorem, there exists a Lipschitz-continuous map $h_0$ that extends $\omega\circ \hat{h}_n$ and has the same Lipschitz constant:
	$h_0: R^n \rightarrow W^{n-1}$
	\begin{equation}
	\forall \bm{x},\bm{y} \in R^n, d(h_0(\bm{x}),h_0(\bm{y})) \leq \sqrt{\frac{2r_n}{A_n}} d(\bm{x},\bm{y})
	\end{equation}
	We define $h_n$ on $\{0,1\}^n$ as follows:
	\begin{equation}
	\forall \bm{x} \in \{0,1\}^n, h_n(\bm{x})_i = \left\{ \begin{array}{ll}
	0 & |h_0(\bm{x})_i+\frac{r_n}{n}| \leq |h_0(\bm{x})_i-\frac{n-r_n}{n}|\\
	1 & |h_0(\bm{x})_i+\frac{r_n}{n}| > |h_0(\bm{x})_i-\frac{n-r_n}{n}|
	\end{array} \right. 1 \leq i \leq n
	\end{equation}
	Given an arbitrary $\bm{y} \in S_n$, we have $\forall \bm{x} \in \{0,1\}^n$,
	\begin{equation}|h_0(\bm{x})_i-h_0(\bm{y})_i| < \frac{1}{2} \Rightarrow h_n(\bm{x})_i=h_n(\bm{y})_i\end{equation}
	Let $k$ denote the number of bits such that $|h_0(\bm{x})_i-h_0(\bm{y})_i| \geq  \frac{1}{2}$, we have 
	\begin{equation}\sqrt{\frac{k}{4}} \leq d(h_0(\bm{x}),h_0(\bm{y})) \leq \sqrt{\frac{2r_n}{A_n}} d(\bm{x},\bm{y}) \Rightarrow k \leq \frac{8r_n}{A_n} d_H(\bm{x},\bm{y})\end{equation}
	Therefore
	\begin{equation} d_H(h_n(\bm{x}),h_n(\bm{y})) =\sum_{i=1}^n |h_n(\bm{x})_i-h_n(\bm{y})_i| \leq k \leq \frac{8r_n}{A_n} d_H(\bm{x},\bm{y})\end{equation}
\end{proof}

\subsection*{Proof of Theorem~\ref{neural SMA}}
Theorem~\ref{neural SMA} guarantees stable memory allocation properties of a three-layer neural circuit with divisive inhibition. Key issue in the proof is to estimate high probability upper and lower bounds for output-layer proportion of activation, as well as expansion rates. From a high level point of view, the second layer of this network makes it possible for a wide range of input density to have approximately the same output density, while in the third layer, the parameter for divisive inhibition enables us to control the output density at an arbitrary level. The continuity and orthogonality condition is satisfied since random projection preserves locality.\\
Lemma~\ref{middle layer stablity} describes the probability of activation in the second layer:
\begin{proof}[Proof of Lemma~\ref{middle layer stablity}]
	By symmetry, the probability of $\beta^T\bm{x}>0$ and $\beta^T\bm{x}<0$ is equal. So we have $\mathop{Pr}\{\beta^T\bm{x}>0\}=\frac{1}{2}\left(1-\mathop{Pr}\{\beta^T\bm{x}>0\}\right).$
	We can then write this probability as:
	\begin{equation}Pr\{\beta^T\bm{x}=0\}=\sum_{i=1}^{\lfloor|x|/2\rfloor}\binom{|x|}{2i}\binom{2i}{i}p^{2i}(1-2p)^{|x|-2i}\end{equation}
	By applying Chernoff bound in relative entropy form, we have:
	\begin{equation}\sum_{i=2p|x|+1}^{\lfloor|x|/2\rfloor}\binom{|x|}{2i} 2^{2i}p^{2i}(1-2p)^{|x|-2i}\leq e^{-2(2\ln 2-1)p|x|}\end{equation}
	For the first $2p|x|$ terms, we use Stirling approximation $2^{-k}\binom{2k}{k}=\frac{1}{\sqrt{\pi k}}+O(\frac{1}{k})$ for each term. Since first $p|x|$ terms are smaller than terms with $k\in [p|x|+1,2p|x|]$, we have
	\begin{equation}\sum_{i=1}^{2p|x|}\binom{|x|}{2i} \binom{2i}{i}p^{2i}(1-2p)^{|x|-2i}\leq \frac{2}{\sqrt{\pi p|x|}}\sum_{i=p|x|+1}^{2p|x|}\binom{|x|}{2i} (2p)^{2i}(1-2p)^{|x|-2i}\leq \frac{1}{\sqrt{\pi p|x|}}\end{equation}
	which concludes the proof of this lemma.
\end{proof}
\begin{proof}[Proof of Lemma~\ref{middle one-bit change probablity}]
	Let $q$ be the index where $x_q\neq y_q$, we assume $x_q=0,y_q=1$ without loss of generality. On such circumstance, the only chance to make $g(x)\neq g(y)$ is $\langle \beta_q=1,\beta^Tx=0\rangle$ or $\langle\beta_q=-1,\beta^Tx=1\rangle$. Using the estimate in Lemma~\ref{middle layer stablity} we can obtain the probability of such events, which leads directly to the result.
\end{proof}
\begin{proof}[Proof of Lemma~\ref{CLT second-layer change probability} and Lemma~\ref{CLT third-layer change probability}]
	In the proof we use Gaussian approximation for binomial variables, the error term can be estimated via Berry-Esseen bound.\\
	Let the common bits of $x$ and $y$ be vector $w=x\cap y$, we have $x=w+\tilde{x},y=w+\tilde{y}$. We obtain independent random variables $\beta^Tw,\beta^T\tilde{x},\beta^T\tilde{y}$. By applying bivariate central limit theorem to them, we have
	\begin{equation}
	\left[
	\begin{matrix}
	\beta^Tx\\
	\beta^Ty
	\end{matrix}
	\right]\sim
	\mathcal{N}\left(
	p(1-2c)
	\left[
	\begin{matrix}
	|x|\\
	|y|
	\end{matrix}
	\right],
	v(c,p)
	\left[
	\begin{matrix}
	|x|&|w|\\
	|w|&|y|
	\end{matrix}
	\right]
	\right)
	\end{equation}
	Then the probability can be written as quadrant integral of bivariate Gaussian distribution, as appears in the lemmas.
	\begin{equation}
	Pr\left\{g(x)\neq g(y)\right\}\approx \int\int_{u\cdot v<0} 	\mathcal{N}\left(
	p(1-2c)
	\left[
	\begin{matrix}
	|x|\\
	|y|
	\end{matrix}
	\right],
	v(c,p)
	\left[
	\begin{matrix}
	|x|&|w|\\
	|w|&|y|
	\end{matrix}
	\right]
	\right)dudv
	\end{equation}
	The integral in Lemma~\ref{CLT second-layer change probability} centers at origin and admits a closed-form solution. While in Lemma~\ref{CLT third-layer change probability} we write it as univariate integral with CDF and PDF of Gaussian. Later we will estimate its value given specific parameters.\par
	According to Berry-Esseen inequality, the error introduced in cumulative density functions by Gaussian approximator is bounded with $O(\frac{E[|\beta_i|^3]}{var(\beta_i)^{1.5}\sqrt{n}})=O(\frac{1}{\sqrt{pn}})$.
\end{proof}
Now we proceed to prove Theorem~\ref{neural SMA}.
\begin{proof}[Proof of Theorem~\ref{neural SMA}]
	We give proofs for three requirements one by one:\par
	\textbf{Stability}: Let $s_1$ denote the probability for middle-layer neurons to fire. According to Lemma~\ref{middle layer stablity}, $s_1$ is restricted in a narrow range: $0\leq \frac{1}{2}-s_1\leq \frac{1}{2\sqrt{\pi p |x|}}+\frac{1}{2}e^{-2(2\ln 2-1)p|x|}$.\\
	Now that each unit in third layer depends upon the sum of several $i.i.d.$ random variables. By applying central limit theorem to the sum, we can estimate the probability for a output-layer neuron to fire is approximately \begin{equation}Pr\{h(x)_i=1\}\approx\Phi\left(\sqrt{n}\frac{(1-2c)ps_1}{\sqrt{v(c,ps_1)}}\right)\end{equation}
	By plugging in the parameter  $C_2=\frac{1}{2}-
	\frac{s}{2\sqrt{2-s^2}}$ and $s=\Phi^{-1}\left(\frac{r_n}{n}\right)\sqrt{\frac{2}{np}}$, we can show that 
	\begin{equation}\leq\frac{r_n}{n}-Pr\{h(x)_i=1\}\leq \frac{log(n/r_n)}{\sqrt{s_0n^{1-\gamma}}}\end{equation}
	Since all output units are $i.i.d.$ with respect to randomness of last layer, we can therefore apply Chernoff's bound to conclude the stability argument.\par
	\textbf{Continuity}:
	The technique used in our proof is based on bounding the output difference between two vector who has exact one different bit. Fix $d_H(x,y)=L$,  we construct a sequence $x=v_0,v_1,v_2,\cdots,v_L=y$ with $d_H(v_{i-1},v_i)=1$. \\
	For $v_{i-1}$ and $v_{i}$, consider the probability for each second-layer neuron to be different. According to Lemma~\ref{middle one-bit change probablity}, we can estimate $Pr\{g(v_i)\neq g(v_{i-1})\}\approx \frac{1}{\pi}\cos^{-1}\frac{|v_{i-1}\cap v_{i}|}{\sqrt{|v_{i}|\cdot |v_{i-1}|}}$. By union bound we have the following probabilistic triangle inequality:
	\begin{equation}Pr\{g(x)\neq g(y)\}\leq Pr\{\bigcup_{i=1}^{L}g(v_i)\neq g(v_{i-1})\}\leq \sum_{i=1}^{L}Pr\{g(v_i)\neq g(v_i-1)\}
	\end{equation}
	Lemma~\ref{middle layer stablity} tells us that both $x$ and $y$ activate approximately $\frac{n}{2}$ neurons in the middle layer, which satisfies the requirement of Lemma~\ref{CLT third-layer change probability}. By applying
	Lemma~\ref{CLT third-layer change probability} with $\eta=O(L\sqrt{\frac{p}{s_0n}})$, we derive the following (approximate) upper bound for probability of changed bits in third layer:
	\begin{equation}O(1)\cdot\int_{\mathbb{R}}{\varphi(t)\Phi\left(-\frac{(s_0n^{1+\gamma})^{\frac{1}{4}}}{\sqrt{L}}|t-\Phi^{-1}(\frac{r_n}{n})|\right)dt}\end{equation}
	We estimate the integral by dividing it into three parts:\\
	Let $t_{1},t_{2}=\Phi^{-1}(\frac{r_n}{n})\pm\frac{2\log n\sqrt{L}}{(s_0n^{1+\gamma})^{\frac{1}{4}}}$ respectively. We have
	\begin{equation}\int_{(t_1,t_2)}{\varphi(t)\Phi\left(-\frac{(s_0n^{1+\gamma})^{\frac{1}{4}}}{\sqrt{L}}|t-\Phi^{-1}(\frac{r_n}{n})|\right)dt}\leq \int_{(t_1,t_2)}\phi(t)dt\leq \frac{1}{\sqrt{2\pi}}(t_2-t_1)\end{equation}
	For the integral value outside this interval, we bound it with Gaussian tail:
	\begin{equation}\int_{t>t_2}{\varphi(t)\Phi\left(-\frac{(s_0n^{1+\gamma})^{\frac{1}{4}}}{\sqrt{L}}|t-\Phi^{-1}(\frac{r_n}{n})|\right)dt}\leq \int_{t>t_2}{\varphi(t)\Phi\left(-2\log n\right)dt}=O\left( \frac{1}{n^2}\right)
	\end{equation}
	(For $t<t_1$, we have very similar bounds)\\
	Summarizing the calculation above, we can get an approximate upper bound \begin{equation}Pr\{h(x)_i\neq h(y)_i\}=O\left(\frac{\log n\sqrt{L}}{(s_0n^{1+\gamma})^{\frac{1}{4}}}\varphi(\Phi^{-1}(\frac{r_n}{n}))\right)\end{equation}
	So the probability for each output neuron to differ is upper bounded with $\frac{\mu_n\sqrt{L}}{n}$. We then apply the Chernoff bound in relative entropy form and conclude the probabilistic upper bound.\\
	\textbf{Orthogonality}:
	For $x,y\in \{0,1\}^n$ such that $d_H(x,y)\geq an$, we have 
	\begin{equation}\frac{|x\cap y|}{\sqrt{|x|\cdot |y|}}\leq \sqrt{\frac{|x\cap y|}{|x\cup y|}}\leq \sqrt{1-a}\end{equation}
	According to Lemma~\ref{CLT second-layer change probability}, the probability of making differences at second layer is at least \begin{equation}Pr\{g(x)\neq g(y)\}\geq b_1\triangleq\frac{1}{\pi}\cos^{-1}\sqrt{1-a}\end{equation}
	Then we apply Lemma~\ref{CLT third-layer change probability} with $\eta=b_1$:
	\begin{equation}Pr\{h(x)_i\neq h(y)_i\}\geq\int_{\mathbb{R}}\varphi(\frac{t}{\sqrt{1-\frac{1}{2}b_1}})\Phi(\frac{|t-\Phi^{-1}(r_n/n)|}{\sqrt{b_1}})dt\end{equation}
	Assuming $b_1$ constant, we can obtain the following lower bound for this integral:
	\begin{equation}
	\begin{split}
	Pr\{h(x)_i\neq h(y)_i\}\geq &\int_{\mathbb{R}}\varphi(\frac{t}{\sqrt{1-\frac{1}{2}b_1}})\Phi(\frac{|t-\Phi^{-1}(r_n/n)|}{\sqrt{b_1}})dt\\
	\geq&\int_{|t-\Phi^{-1}(\frac{r_n}{n})|\leq \sqrt{b_1}}\varphi(\frac{t}{\sqrt{1-\frac{1}{2}b_1}})\Phi(\frac{|t-\Phi^{-1}(r_n/n)|}{\sqrt{b_1}})dt\\
	\geq&\int_{|t-\Phi^{-1}(\frac{r_n}{n})|\leq \sqrt{b_1}}\varphi(\frac{t}{\sqrt{1-\frac{1}{2}b_1}})\Phi(-1)dt\\
	\geq&\Phi(-1)\Phi\left(\frac{\Phi^{-1}(r_n/n)+\sqrt{b_1}}{\sqrt{1-\frac{1}{2}b_1}}\right)-\Phi(-1)\Phi\left(\frac{\Phi^{-1}(r_n/n)-\sqrt{b_1}}{\sqrt{1-\frac{1}{2}b_1}}\right)\\
	\geq&\frac{br_n}{n}
	\end{split}
	\end{equation}
	The value of $b$ depends upon neither $r_n$ nor $n$, but only $b_1$, which is assumed to be a constant. So we have proved a lower bound for the probability of third-layer nodes to be different. Since those nodes are $i.i.d.$ with respect to the randomness in the last layer. By applying Chernoff bound we can get this result.
\end{proof}
\bibliographystyle{plain}
\bibliography{sma}
\end{document}